\documentclass[letterpaper]{article} 
\usepackage[]{aaai23}  
\usepackage{times}  
\usepackage{helvet}  
\usepackage{courier}  
\usepackage[hyphens]{url}  
\usepackage{graphicx} 
\urlstyle{rm} 
\usepackage{natbib}  
\usepackage{caption} 
\frenchspacing  
\setlength{\pdfpagewidth}{8.5in} 
\setlength{\pdfpageheight}{11in} 
%
\usepackage{algorithm}
\usepackage{algorithmic}
\usepackage{subcaption}
\usepackage{booktabs}
\usepackage{amsmath}
\usepackage{amssymb}
\usepackage{mathtools} 
\usepackage{amsthm}

\newtheorem{proposition}{Proposition}
\usepackage{booktabs} 
\usepackage{tikz} 
\usepackage{makecell}
\usepackage{amsmath}
\usepackage{amsfonts}
\usepackage{amssymb}
\usepackage[hidelinks]{hyperref}

\usepackage{amsmath}
\DeclareMathOperator*{\argmax}{arg\,max}
\DeclareMathOperator*{\argmin}{arg\,min}
\newcommand\figref[1]{Fig.~\ref{#1}}
\newcommand\equref[1]{Eq.~\eqref{#1}}
\newcommand\secref[1]{Sec.~\ref{#1}}
%
\usepackage{newfloat}
\usepackage{listings}
\DeclareCaptionStyle{ruled}{labelfont=normalfont,labelsep=colon,strut=off} 
\lstset{%
	basicstyle={\footnotesize\ttfamily},
	numbers=left,numberstyle=\footnotesize,xleftmargin=2em,
	aboveskip=0pt,belowskip=0pt,%
	showstringspaces=false,tabsize=2,breaklines=true}
\floatstyle{ruled}
\newfloat{listing}{tb}{lst}{}
\floatname{listing}{Listing}
%
\pdfinfo{
/TemplateVersion (2023.1)
}

\setcounter{secnumdepth}{2} 

%


\title{The Effect of Modeling Human Rationality Level\\ on Learning Rewards from Multiple Feedback Types}
\author{
     Gaurav R. Ghosal\textsuperscript{\rm 1}\equalcontrib,
     Matthew Zurek\textsuperscript{\rm 2}\equalcontrib,
     Daniel S. Brown\textsuperscript{\rm 3},
     Anca D. Dragan
     \textsuperscript{\rm 1} 
}
\affiliations{
    \textsuperscript{\rm 1}University of California, Berkeley, \textsuperscript{\rm 2}University of Wisconsin, Madison, \textsuperscript{\rm 3}University of Utah\\
    \{gauravrghosal,anca\}@berkeley.edu, \, matthewzurek@cs.wisc.edu, \, dsbrown@cs.utah.edu


%
}

\usepackage{bibentry}

\begin{document}

\maketitle

\begin{abstract}
When inferring reward functions from human behavior (be it demonstrations, comparisons, physical corrections, or e-stops), it has proven useful to model the human as making noisy-rational choices, with a ``rationality coefficient" capturing how much noise or entropy we expect to see in the human behavior. Prior work typically sets the rationality level to a constant value, regardless  of the type, or quality, of human feedback. However, in many settings, giving one type of feedback (e.g. a demonstration) may be much more difficult than a different type of feedback (e.g. answering a comparison query). Thus, we expect to see more or less noise depending on the type of human feedback. In this work, we advocate that grounding the rationality coefficient in real data for each feedback type, rather than assuming a default value, has a significant positive effect on reward learning. We test this in both simulated experiments and in a user study with real human feedback. We find that overestimating human rationality can have dire effects on reward learning accuracy and regret. We also find that fitting the rationality coefficient to human data enables better reward learning, even when the human deviates significantly from the noisy-rational choice model due to systematic biases. Further, we find that the rationality level affects the informativeness of each feedback type: surprisingly, demonstrations are not always the most informative---when the human acts very suboptimally, comparisons actually become more informative, even when the rationality level is the same for both.  Ultimately, our results emphasize the importance and advantage of paying attention to the assumed human-rationality-level, especially when agents actively learn from multiple types of human feedback.

\end{abstract}

\section{Introduction}
\begin{figure*}[hbt!]
    \centering
    \includegraphics[width=0.75\textwidth]{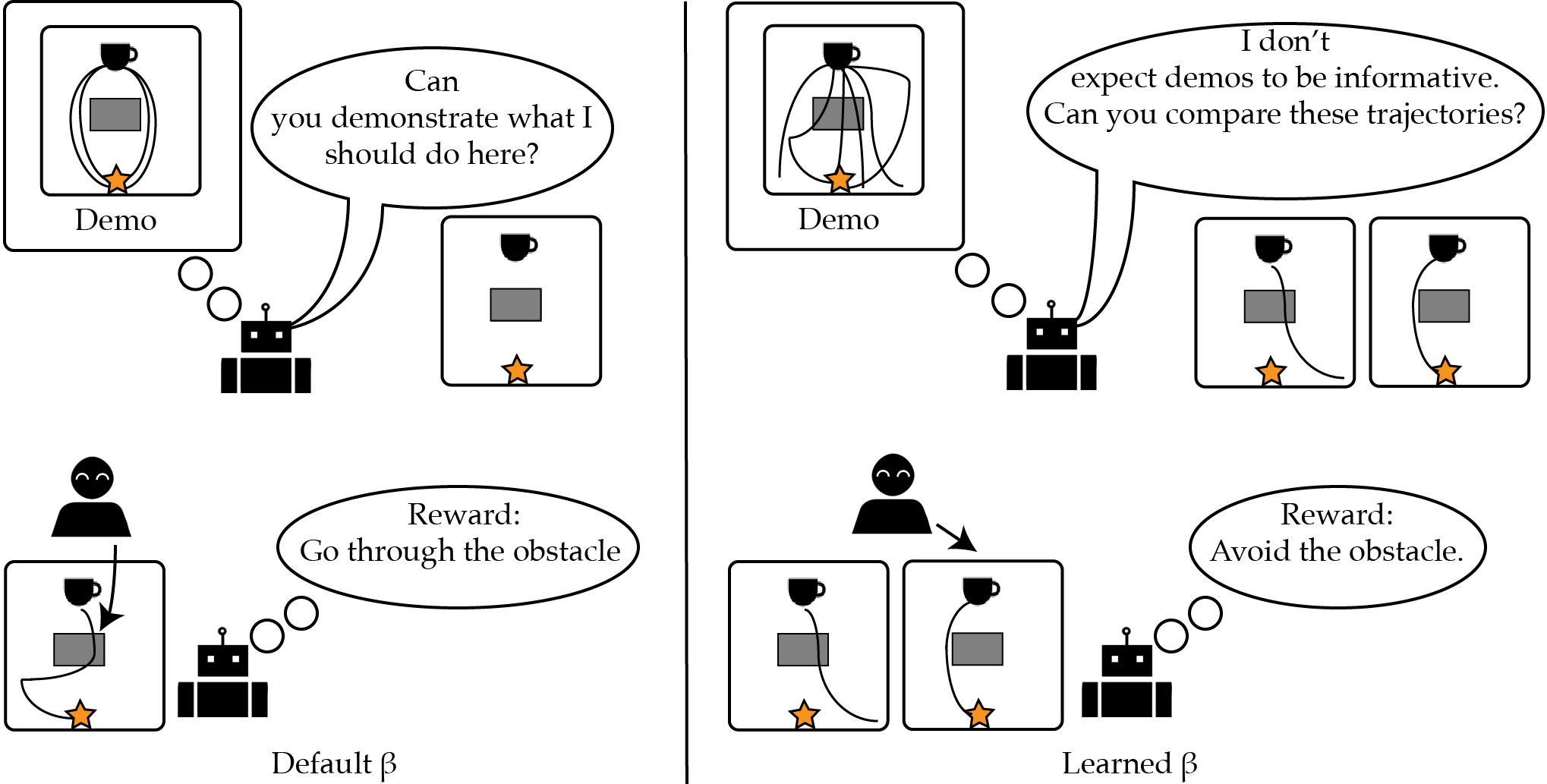}
    \label{figure:frontfig}
    \caption{\textbf{Benefits of Considering Human Rationality}. We depict reward inference via active querying over multiple feedback types. In the Default $\beta$ setting (left panel), the robot erroneously  assumes that the human is rational in giving demonstrations and consequently infers the reward poorly. Under the Learned $\beta$ setting (right panel), the robot anticipates the human's irrationality in demonstrations and is able to query for comparisons instead. As a result, the inferred reward more closely aligns with the human's intention.}
    \label{fig:frontfig}
\end{figure*}
Reward learning started from the inverse optimal control idea that we can recover the underlying objective when observing optimal behavior~\cite{kalman1964linear}, and transitioned into AI with the introduction of inverse reinforcement learning~\cite{ng2000algorithms}. While initial research assumed optimal demonstrators~\cite{ng2000algorithms,ratliff2006maximum}, the field quickly moved to the noisy-rational human model~\cite{morgenstern1953theory}: a number of simultaneous works, with different motivations, converged on a Bolzmann (maximum entropy) distribution, where the human actions are exponentially more probable the higher value they are~\cite{baker2007goal,ramachandran2007bayesian,ziebart2008maximum,henry2010learning, vasquez2014inverse,kretzschmar2016socially,kitani2012activity,wulfmeier2015maximum,brown2018efficient,christiano2017deep,finn2016guided,mainprice2015predicting}.
Often this model would have a "rationality" coefficient $\beta$\footnote{Sometimes denoted by $\alpha$ and sometimes as the inverse ($1/\beta$).} meant to capture how good of an optimizer the human is---setting $\beta$ to 0 would yield the uniform distribution capturing a random human, while $\beta\rightarrow\infty$ would put all the probability mass on optimal actions.

Inspired by the way economists look at preferences, the field then started looking beyond learning from demonstrations to learning from comparisons~\cite{wirth2017survey,christiano2017deep,biyik2020learning}. The model was similar: still a Boltzmann distribution, but over two trajectories/actions, instead of over all possible trajectories/actions. Other researchers started looking at a deluge of feedback types: comparisons~\cite{wirth2017survey}, language~\cite{matuszek2012joint}, demonstrations~\cite{ng2000algorithms}, trajectory rankings \cite{brown2020safe}, corrections~\cite{bajcsy2017learning}, critiques~\cite{cui2018active}, e-stops~\cite{hadfield2017off}, binary feedback~\cite{knox2009interactively}, and proxy rewards~\cite{hadfield2017inverse}. Recently, it was shown that all of these can be interpreted as noisy-rational (Boltzmann) choices~\citep{jeon2020reward}, opening the door to learning from all of these feedback types in combination, and even enabling robots to actively select feedback types.

Boltzmann-rationality's ability to unify different feedback types is useful, but the model comes with this one parameter, $\beta$, which begs the question: what should we set that to? Prior work often either omits $\beta$ (implicitly setting it to 1)~\cite{finn2016guided,christiano2017deep,ibarz2018reward} or sets it to a fixed, often heuristic, value across all feedback types~\cite{ramachandran2007bayesian,shah2019feasibility,biyik2020learning,jeon2020reward}. But demonstrations are sometimes easier or harder to give, depending on the task and the interface, suggesting that $\beta$ should be adapted to the domain. And comparisons might be much easier to answer than demonstrations, suggesting we should be using a higher $\beta$ for the former. Our goal in this work is to answer the question: does this matter? Are there real benefits to grounding $\beta$ in real data for each feedback type, or is it safe to stick to a default value?

We analyze this in both simulation and a user study. Our investigation begins in the single-feedback regime, where we isolate the key role played by $\beta$ in \emph{interpreting feedback}. Through empirical and theoretical analysis, we conclusively demonstrate that overestimating $\beta$ is particularly harmful for reward inference and that inferring $\beta$ benefits reward learning by avoiding this. Importantly, our results generalize across varying forms of biased and noisy behavior, beyond Boltzmann rationality. We find that many known systematic biases can be approximated by a noisy-rational model with a learned $\beta$, enabling better reward inference. We demonstrate this in simulation with specified biases as well as in a user study, where feedback contains arbitrary human biases.

We next study the importance of $\beta$ when an agent actively learns from multiple feedback types. In this setting, we observe a new role played by $\beta$: in addition to controlling the reward inference, the estimate of $\beta$ also affects the selection of which type of feedback should be queried. In particular, we see that $\beta$ strongly affects the informativeness of different feedback types. Surprisingly, this is true even when there is a shared rationality level across feedback types: at low $\beta$ values, we show that comparisons are more informative than demonstrations, while demonstrations gain an advantage at higher $\beta$ values. As a result of $\beta$'s role in both query selection and feedback interpretation, setting it correctly has a significant impact on performance and different settings of beta significantly change the queries selected by active learning. Notably, our findings show the insufficiency of relying on popular heuristics such as starting with demonstrations and fine-tuning with comparisons \cite{ibarz2018reward,palan2019learning,biyik2020learning,liu2023efficient}. Rather, we demonstrate that accounting for rationality is essential for active learning to uncover the feedback query that is truly most informative.

Overall, we contribute an analysis of the effects of estimating a human's rationality level on the quality of reward learning and demonstrate the importance of using the $\beta$ parameter in a principled way over heuristic approaches. In particular, we show that setting $\beta$ appropriately  becomes increasingly crucial as we develop agents that actively learn from multiple types of human feedback. Our analysis can be summarized by the following practical findings:
\begin{enumerate}
    \item Modeling human behavior with Boltzmann rationality provides benefits even in the face of harder to model systematic biases.
    \item When accurate estimates of $\hat{\beta}$ cannot be found, one should err on the side of underestimation.
    \item The success of active learning over multiple feedback types depends strongly on an accurate estimate of $\hat{\beta}$.
    \item The most informative feedback type varies as a function of the human's rationality, even when the feedback types share a common rationality level.
    \item It is possible to obtain good estimates of $\hat{\beta}$ by obtaining a small amount of calibration feedback (where the human optimizes a known reward).
\end{enumerate}

\section{Formulation}
\noindent\textbf{Preliminaries and Notation.}
We model the environment as a finite horizon Markov decision process (MDP) with states $s \in \mathcal{S}$, actions $a \in \mathcal{A}$, and transition dynamics $P(s' \mid s, a)$. The reward function $r: (S \times A) \rightarrow \mathbb{R}$ is initially unknown to the robot but is communicated by a human through multiple forms of feedback, such as demonstrations of desired behavior, preference comparisons between trajectories, and corrective interventions. 

Following prior work \cite{jeon2020reward}, we interpret these varying forms of feedback as a \textit{reward-rational choice} over a (potentially implicit)  choice set $\mathcal{C}$. In this work, we study a robot that \emph{actively chooses} from multiple feedback types. To facilitate this, we model a query for human feedback as a Bayesian experimental design problem ~\cite{chaloner1995bayesian}. We define feedback types as functions from designs to (choice set, grounding) pairs and active learning as the optimization over designs for information gain. For each possible feedback type, the robot has a choice over different possible experimental designs, $\mathcal{X}$, where the experiment design must be specified before the human can provide data. Given an experiment design $x \in \mathcal{X}$ and a human choice $c \in \mathcal{C}(x)$, we use the grounding function $\varphi(x,c)$ to ground the human feedback into the space of trajectories, $\Xi$. The grounded trajectory is interpreted to be Boltzmann rational under the human's reward function. A trajectory, $\xi$, is defined as a sequence of state-action pairs. We use $\xi_{i:j}$ to denote the sub-trajectory starting with the $i$th state-action pair and ending with the $j$th state-action pair.

\subsection{Human Feedback Types}

In this work, we study the three feedback types below.
We refer the reader to Jeon et al.~\cite{jeon2020reward} for a discussion of how many other feedback types can be formalized similarly.

\vspace{1mm}
\noindent \textbf{Demonstrations} can be viewed as a sequence of explicit choices over actions conditioned on states.\footnote{Jeon et al.~\cite{jeon2020reward} model human demonstrations as choices over all possible trajectories; however, with stochastic dynamics, human actions are conditioned on observed state transitions and the human cannot pre-select a specific trajectory.}

The design is all starting states: $\mathcal{X} = \mathcal{S}$ and the grounding function is identity.
For a demonstration $\xi$ starting from state $s_0$ we have the following observation model.

\begin{align}
    \begin{split}
    P(\xi \mid r, \beta) =& \prod_{(s_t,a_t) \in \xi} \pi_\beta (a_t \mid s_t)\\
    =& \prod_{(s_t,a_t) \in \xi} \frac{\exp(\beta Q_{t}^{\rm soft}(s_t,a_t \mid r))}{\sum_{b \in \mathcal{A}} \exp(\beta Q_{t}^{\rm soft}(s_t,b \mid r))}
    \end{split}
\end{align}
where $Q_{t}^{\rm soft}(s,a \mid r) = r(s,a) + \gamma \mathbb{E}_{s'}[V_{t+1}^{\rm soft}(s')]$, and $V_{t}^{\rm soft}(s) = \mathbb{E}_{a \sim \pi_\beta} [ Q_{t}^{\rm soft}(s,a) - \log \pi_\beta(a \mid s)]$ are the soft Q-function, and Value function, respectively~\cite{kitani2012activity,haarnoja2017reinforcement}, and $\pi_\beta$ is the corresponding (time-dependent) policy.

\vspace{1mm}
\noindent \textbf{Comparisons} are a choice between two trajectories. Thus, the possible designs are $\mathcal{X} = \Xi^2$, all pairs of trajectories and the grounding function maps to the preferred trajectory. The likelihood that the human prefers trajectory $A$ over $B$ is given by the Bradley-Luce-Shepherd rule~\cite{bradley1952rank,christiano2017deep}:
\begin{equation}
    \label{comp_eq}
    P(\xi_A \mid r, \beta) = \frac{\exp \left( \beta \cdot r(\xi_A) \right)} { \exp \left( \beta \cdot r(\xi_A) \right)  + \exp \left( \beta \cdot r(\xi_B) \right)}  
\end{equation}

\noindent \textbf{E-stops} represent the intervention of a human telling the robot to stop rather than continue its trajectory. We assume that the human is able to observe the robot's planned trajectory and then selects a desired stopping point $t$ at which point the episode terminates. Thus, the space of possible designs is $\mathcal{X} = \Xi$, all trajectories. The choice set is the stopping time $t$, and the grounding function is the sub-trajectory $\xi_{0:t}$. Given a robot trajectory $\xi$, we have the following likelihood function for the human's choice $c_h = t$:
\begin{equation}\label{eq:likelihood_rric}
    P(t \mid \xi, r, \beta) = \frac{\exp (\beta \cdot r(\xi_{0:t}))}{\sum_{k = 0}^T \exp (\beta \cdot r(\xi_{0:k})) }.
\end{equation}

\subsection{Estimating a Human's Rationality Level from Data}
Rather that assuming a known or constant value for the rationality coefficient, we study the effect of learning an estimate of the human's rationality level, $\hat{\beta}$, from human data.
As a vehicle for our analysis, we consider access to a separate calibration phase where we present the human with a known, calibration reward function, $r'$, and then ask them to provide feedback (e.g., demonstrations, comparisons, e-stops) with respect to this reward function. The benefit of this calibration is that given human feedback that corresponds to a known reward function, we can find $\hat{\beta}$ that maximizes the log-likelihood of \equref{eq:likelihood_rric}: 
\begin{equation}\label{eq:beta_hat_fit_eq}
    \begin{split}
    \hat{\beta} =& \argmax_{\beta} \beta \cdot \mathbb{E}_{\xi \sim \varphi(x, c_h)}[r(\xi)]\\
    -&\log \sum_{c \in \mathcal{C}} \exp \left( \beta \cdot \mathbb{E}_{\xi \sim \varphi(x, c)}[r(\xi)] \right),
    \end{split}
\end{equation}
since $c_{h}$ is collected from the human during calibration and $r',x,C,\varphi$ are constant and known to the robot. This approach intentionally favors simplicity over real-world practicality---our focus is on assessing the importance of having a good model of the human's rationality level---in practice, one could also fit $\hat{\beta}$ on human data optimizing an unknown $r$ by marginalizing over $r$.
\subsection{Active Learning over Feedback Types}

We consider the scenario in which the robot can actively query the most informative feedback given its current belief over a parameterized reward function. We can cast this as the problem of selecting a design $\mathcal{X}$ which optimizes the expected information gain over the possible human feedback induced by $\mathcal{X}$. Concretely, this can be written as the following optimization problem 
\begin{equation}
    \max_{x \in \mathcal{X}} \mathbb{E}_{c_h \sim P(c_h|x)}\left[D_{KL}\big(P(\theta | c_h, x) \| P(\theta)\big)\right],
\end{equation}
in which we consider $P(\theta)$ to be our prior distribution over the reward function, $r_\theta$, parameterized by $\theta$.

\begin{figure*}[hbt!]
     \centering
     \begin{subfigure}[b]{0.32\textwidth}
         \centering
         \includegraphics[width=\textwidth]{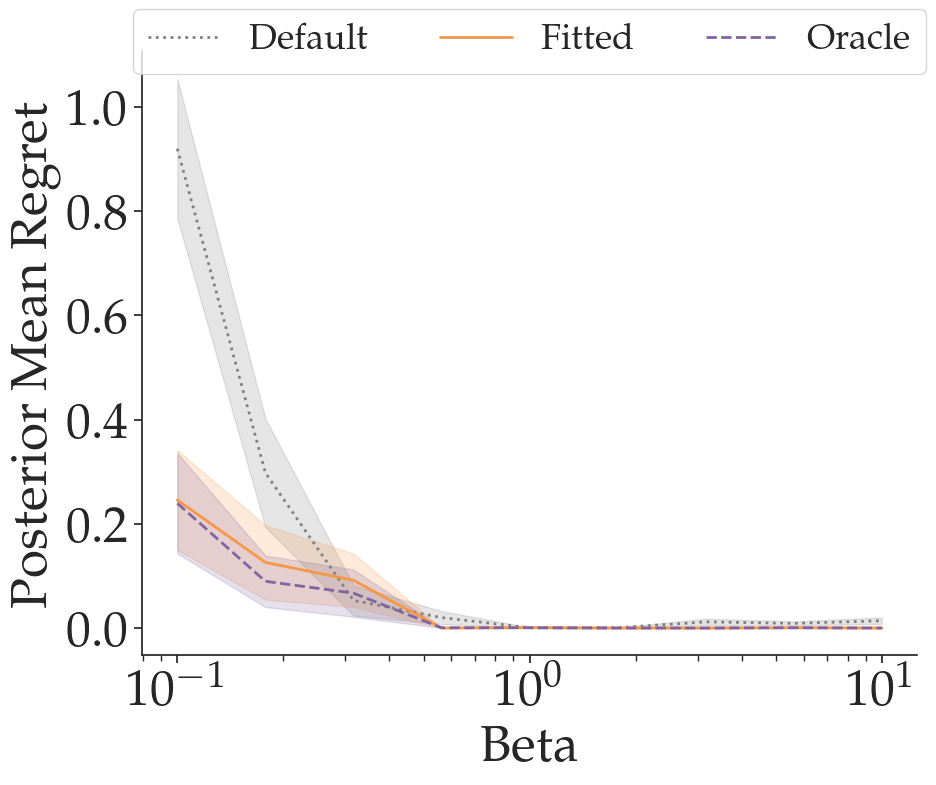}
         \caption{Demonstrations}
         \label{subfig:demo_rational}
     \end{subfigure}
     \begin{subfigure}[b]{0.32\textwidth}
         \centering
         \includegraphics[width=\textwidth]{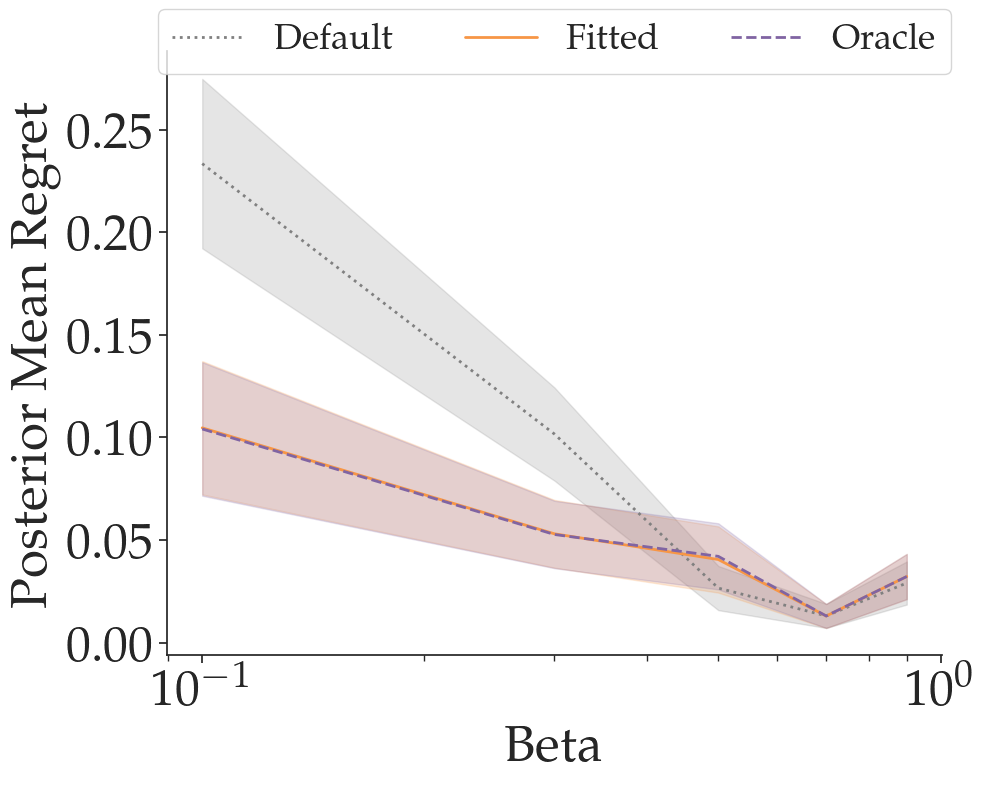}
         \caption{Comparisons}
         \label{subfig:comp_rational}
     \end{subfigure}
     \begin{subfigure}[b]{0.32\textwidth}
         \centering
         \includegraphics[width=\textwidth]{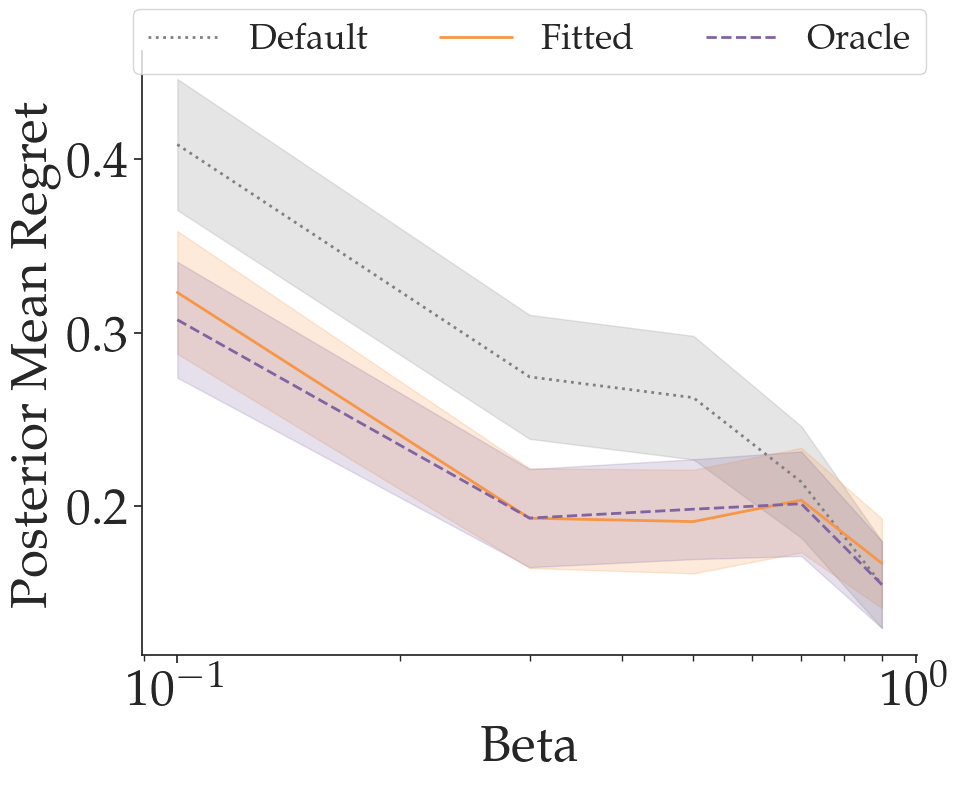}
         \caption{E-Stops}
         \label{subfig:estop_rational}
     \end{subfigure}
         \caption{ \textbf{Reward Inference on Simulated Boltzmann-Rational Feedback.} We compare the rationality-fitting (Fitted) method which learns $\hat{\beta}$ from data to a Default method which assumes $\hat{\beta}=1$ and an Oracle method which sets $\hat{\beta} = \beta^*$. Results show that Fitted (our approach) significantly outperforms Default, achieving similar performance to Oracle. Note that due to the overlap between fitted (orange) and oracle (purple) both curves appear red.
         }

    \label{fig:boltzmannfeedback}
\end{figure*}
\section{Effect of \texorpdfstring{$\hat{\beta}$}{Beta} when Learning from a Single Feedback Type}
In this section, we test the following hypothesis:

\vspace{1mm}
\noindent \textbf{H1:} \textit{Reward inference with a fitted beta will perform better than reward inference with a default beta across different feedback types and different forms of human irrationality.}

We test hypothesis \textbf{H1} over different forms of simulated feedback: first Boltzmann-rational and then systematically biased. In Section~\ref{User}, we test \textbf{H1} on real human data.

\vspace{1mm}
\noindent\textbf{Metrics.}
We measure reward inference performance via the normalized regret from optimizing the posterior mean reward i.e., the difference measured in the ground truth reward between optimizing the ground truth reward vs optimizing the posterior mean inferred reward. In Appendix C, we evaluate other reward inference metrics, which show similar trends.

\vspace{1mm}
\noindent\textbf{Experimental Design.}
We follow the same experimental design when fitting $\hat{\beta}$ and performing reward inference for both simulated and real human feedback. When fitting $\hat{\beta}$, we use 4 randomly chosen calibration reward functions and query the human for feedback 5 times for each calibration reward. When performing reward inference we query the human for feedback 5 times and then perform reward inference using the previously calibrated $\hat{\beta}$. When analyzing learning from individual feedback types, we randomly sample designs. In \secref{sec:active_learning} we examine actively selecting designs.

\vspace{1mm}
\noindent\textbf{Experimental Domains.}
Our simulation experiments take place in a suite of discrete gridworld navigation environments. Reward functions, $r_\theta$, are parameterized by a linear combination of features that indicate the color of each gridcell (see Appendix B for more details). In order to perform exact Bayesian inference, we discretize the reward space by 1000 points. Additionally, in Appendix K, we provide evidence of similar results in a self-driving car domain.

\subsection{Learning from Simulated Boltzmann-Rational Feedback}\label{sec:simexpboltz}
\noindent\textbf{Results.}
We assess the importance of beta fitting for demonstrations, comparisons, and e-stops by running reward inference on simulated Boltzmann-rational feedback generated with different $\beta$ values. We compare reward inference using the fitted $\hat{\beta}$ (\textit{Fitted}) to a default method that sets $\hat{\beta}=1$ (\textit{Default}) and an oracle method that has access to the ground truth $\beta$ value (\textit{Oracle}). The results in \figref{fig:boltzmannfeedback} demonstrate that when feedback is highly sub-optimal, \textit{Fitted} results in significantly better inference than \textit{Default}, and performs comparably to \textit{Oracle}. These observations support \textbf{H1}---using a learned value for $\hat{\beta}$ improves performance, especially in cases where the human acts more noisily. 

\vspace{1mm}
\noindent\textbf{Remark: Under-estimating $\beta$ is better than over-estimating it.}
In \figref{fig:boltzmannfeedback}, we observe an asymmetry between the settings of over- and under-estimating $\beta$. We see that while over-estimation results in poorer performance, under-estimation does not harm reward inference performance as much. In what follows, we present some intuition for this phenomenon. In particular, we show that using a lower $\hat{\beta}$ is risk-averse when the human is suboptimal but is still a good choice even when the person is optimal, whereas a high value of $\hat{\beta}$ leads to poor reward inference when the human is suboptimal. To simplify notation, we define $r(c) \triangleq \mathbb{E}_{\xi \sim \varphi(x, c)}[r(\xi)]$ for $c \in \mathcal{C}$.
\begin{proposition}
If the human is optimal, then $r^*$ is an MLE estimate for any value of $\hat{\beta} \in [0,\infty)$.
\end{proposition}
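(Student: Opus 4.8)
The plan is to show that for every fixed $\hat\beta \in [0,\infty)$ the true reward $r^*$ lies in the set of maximizers of the likelihood of the observed feedback. First I would unpack ``the human is optimal'': it means that for the realized design $x$ the human's choice $c_h$ satisfies $r^*(c_h) = \max_{c \in \mathcal{C}} r^*(c)$. For comparisons and e-stops this is immediate from the definition of optimality; for demonstrations it is the statement that an optimal demonstration executes the hard-$\argmax$ policy, i.e.\ the $\beta \to \infty$ limit of $\pi_\beta$, so the grounded trajectory maximizes the soft return and again $r^*(c_h) = \max_c r^*(c)$ in the notation $r(c) \triangleq \mathbb{E}_{\xi \sim \varphi(x,c)}[r(\xi)]$. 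Since the queries factor across the likelihood (conditionally independent given $(r,\hat\beta)$), it suffices to treat a single query. The case $\hat\beta = 0$ is then immediate: $P(c_h \mid r, 0) = 1/|\mathcal{C}|$ for every $r$, so the likelihood is constant in $r$ and $r^*$ is trivially an MLE.

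For $\hat\beta > 0$ I would write the log-likelihood of the single query as $\ell_{\hat\beta}(r) \triangleq \hat\beta\, r(c_h) - \log \sum_{c \in \mathcal{C}} \exp\bigl(\hat\beta\, r(c)\bigr) = -\log\bigl(1 + \sum_{c \neq c_h} \exp(\hat\beta(r(c) - r(c_h)))\bigr)$ and observe that $\ell_{\hat\beta}$ is strictly increasing in each advantage gap $\delta_c(r) \triangleq r(c_h) - r(c)$. Human optimality gives $\delta_c(r^*) \ge 0$ for all $c$. The key step is to promote this to a statement that $r^*$ maximizes $\ell_{\hat\beta}$ over the admissible reward class --- concretely, that no admissible reward improves on $r^*$ across the relevant gaps in a way that raises $\ell_{\hat\beta}$. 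Because this monotone-in-gaps reasoning never references the particular value of $\hat\beta$, once it is established for one $\hat\beta > 0$ it holds for all of them, which is exactly the ``for any value of $\hat\beta$'' clause.

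I expect the key step just described to be the main obstacle: it requires certifying that $r^*$ actually attains the maximum, not merely that it is a plausible explanation of the optimal feedback. This is where one must use the structure of the reward hypothesis class --- the scale normalization built into the discretized, feature-linear parametrization, together with the fact that the admissible rewards are those consistent with the observed optimal choices --- since a fully unconstrained, unnormalized class would let one sharpen the gaps arbitrarily by rescaling, so the normalization is essential here. For demonstrations there is additional bookkeeping: one checks that the soft Bellman recursion for $Q_t^{\rm soft}$ and $V_t^{\rm soft}$ preserves optimality of the demonstrated actions under $r^*$ at finite $\hat\beta$, which follows from monotonicity of the soft Bellman operator and the fact that the hard-$\argmax$ policy is the monotone limit of $\pi_\beta$.
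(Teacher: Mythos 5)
You have correctly reduced the problem to the paper's setting: write the single-query log-likelihood as $-\log\bigl(1+\sum_{c\neq c_h}\exp(\hat\beta(r(c)-r(c_h)))\bigr)$ and use optimality of the human to get $r^*(c_h)-r^*(c)\ge 0$ for all $c$. But the step you yourself flag as the ``main obstacle'' --- promoting nonnegative gaps to the statement that $r^*$ \emph{maximizes} the likelihood over the hypothesis class --- is a genuine gap, and it cannot be closed in the generality you set up. The likelihood is strictly increasing in each gap $r(c_h)-r(c)$, so any admissible reward with strictly larger gaps at $c_h$ beats $r^*$: with two choices, $r^*(c_h)=1$, $r^*(c)=0.9$, and a competitor $r'(c_h)=1$, $r'(c)=0$, the optimal choice $c_h$ is strictly more likely under $r'$ than under $r^*$ for every $\hat\beta>0$, and the unit-norm normalization of the reward class does not rescue this. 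So no ``monotone-in-gaps'' argument will certify $r^*$ as a maximizer from optimality of $c_h$ alone.

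What the paper actually does is shorter and sidesteps the partition function entirely: from $r^*(c_h)\ge r^*(c)$ for all $c$ it writes $r^*\in\arg\max_r e^{r(c_h)}=\arg\max_r e^{\hat\beta\cdot r(c_h)}=\arg\max_r P(c_h\mid r,\hat\beta)$, i.e.\ it treats the likelihood as proportional to $e^{\hat\beta r(c_h)}$ with the normalizer $\sum_c e^{\hat\beta r(c)}$ implicitly constant in $r$, and the initial membership $r^*\in\arg\max_r e^{r(c_h)}$ is itself only valid under an (unstated) normalization making $r(c_h)$ maximal at $r^*$. In other words, the paper commits to exactly the structural assumption you were unwilling to assert, rather than deriving it. Your $\hat\beta=0$ case and the observation that the argument is uniform in $\hat\beta$ match the paper; the extra soft-Bellman bookkeeping for demonstrations is unnecessary, since both the paper and your own reduction work at the level of the abstract choice $c_h$ with $r(c)=\mathbb{E}_{\xi\sim\varphi(x,c)}[r(\xi)]$. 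The honest conclusion is that your attempt locates a real weakness: either one adopts the paper's implicit convention, in which case the proof is one line, or the key step you deferred fails as stated.
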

\begin{proof}
An optimal human (i.e., $\beta = \infty$) never makes mistakes. Thus, $r^*(c_h) \geq r^*(c), \forall c \in \mathcal{C}$ and
\begin{align}
\begin{split}
r^* \in \arg \max_r e^{  r(c_h)} =& \arg \max_r e^{\hat{\beta} \cdot  r(c_h)}\\
=& \arg\max_r P(c_h | r, \hat{\beta}).
\end{split}
\end{align}
Thus, $r^*$ is an MLE estimate given $c_h$, $\forall \hat{\beta} \in [0, \infty)$.
\end{proof}
Even though the MLE reward may not change, the shape of the posterior distribution over $r$ is strongly influenced by the choice of $\hat{\beta}$. When the human is suboptimal, we want to have robots that hedge their bets, rather than becoming overly confident in their estimate of the true reward function. Prior work proposes the Shannon entropy over the robot's belief $P(r \mid c_h)$ as a quantitative measure of the robot's confidence~\cite{jonnavittula2021know}. Using this definition, we present the following result.
\begin{proposition}
The robot becomes more (over-)confident as $\hat{\beta}$ increases.
\end{proposition}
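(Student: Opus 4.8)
The plan is to treat $\hat{\beta}$ as a continuous parameter and show that the Shannon entropy of the (discretized) posterior is non-increasing in $\hat{\beta}$ in the relevant regime. Write the posterior weights as $w_i(\hat{\beta}) \propto P(r_i)\, q_i(\hat{\beta})$ with $q_i(\hat{\beta}) = P(c_h \mid r_i,\hat{\beta}) = \exp(\hat{\beta} r_i(c_h))\big/\sum_{c\in\mathcal{C}}\exp(\hat{\beta} r_i(c))$ as in \equref{eq:likelihood_rric}, let $H(\hat{\beta}) = -\sum_i w_i(\hat{\beta})\log w_i(\hat{\beta})$, and introduce the score $u_i(\hat{\beta}) \triangleq \partial_{\hat{\beta}}\log q_i(\hat{\beta}) = r_i(c_h) - \mathbb{E}_{c\sim P(\cdot\mid r_i,\hat{\beta})}[r_i(c)]$, which quantifies how far from optimal the observed choice $c_h$ is under hypothesis $r_i$ at inverse temperature $\hat{\beta}$.

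The first step is a short calculation. Differentiating the normalized weights gives $\partial_{\hat{\beta}}\log w_i = u_i - \mathbb{E}_w[u]$, hence $\dot w_i = w_i(u_i - \mathbb{E}_w[u])$, and since $\sum_i \dot w_i = 0$,
\[
\partial_{\hat{\beta}} H(\hat{\beta}) \;=\; -\sum_i \dot w_i \log w_i \;=\; -\,\mathrm{Cov}_{w}\big(\log w,\; u\big).
\]
So the proposition reduces to showing that, under the posterior, the log-weights $\log w_i$ and the scores $u_i$ are non-negatively correlated. The intuition I would make precise is a comonotonicity statement: by Bayes, hypotheses carrying more posterior mass are exactly those under which $c_h$ is more likely, and these are the hypotheses whose likelihood keeps increasing as $\hat{\beta}$ grows — i.e. those with larger score — so $(\log w_i)$ and $(u_i)$ are sorted the same way across $i$; once they are comonotone, Chebyshev's sum inequality (equivalently an FKG argument) yields $\mathrm{Cov}_w(\log w, u)\ge 0$. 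Concretely I would try to get the comonotonicity by showing $P(c\mid r,\hat{\beta})$ is log-supermodular (totally positive) in $(r,c)$, using convexity of $\hat{\beta}\mapsto\log\sum_{c}\exp(\hat{\beta} r(c))$, so the ordering of the likelihoods $q_i$ matches the ordering of the scores $u_i$.

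The main obstacle is precisely the sign of that covariance — the bookkeeping identity is easy, but $\log w_i = \log P(r_i) + \log q_i(\hat{\beta}) - \log Z(\hat{\beta})$, so a strongly peaked prior, or a large $\hat{\beta}$ at which $c_h$ has become near-optimal under many hypotheses at once (so the likelihood stops discriminating among them), can weaken or invert the comonotonicity. I would therefore aim the clean statement at one of: (i) a uniform (or sufficiently flat) prior in the regime where $c_h$ stays discriminative, where the argument above goes through; (ii) a local/Laplace version — the curvature of the log-posterior at its mode gains an additive term $\hat{\beta}^2 F$ with $F$ the Fisher information of the choice model, so the posterior concentrates and its Gaussian-approximated entropy falls with $\hat{\beta}$; or (iii) the $c_h$-averaged form, where $\mathbb{E}_{c_h}[H(r\mid c_h)] = H(r) - I(r;c_h)$ and one shows $I(r;c_h)$ is non-decreasing in $\hat{\beta}$ because a larger $\hat{\beta}$ sharpens the Boltzmann likelihood channel (a data-processing argument, cleanest when the family is Blackwell-ordered in $\hat{\beta}$, and transparent at the endpoints $\hat{\beta}=0$, where the posterior equals the prior, and $\hat{\beta}\to\infty$, where it collapses onto $\{r : c_h\in\operatorname{arg\,max}_c r(c)\}$). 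Identifying which of these is the intended reading, and the exact structural condition that forces the covariance non-negative, is the crux.
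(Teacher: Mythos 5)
Your covariance identity $\partial_{\hat{\beta}}H=-\mathrm{Cov}_w(\log w,\,u)$ is correct, but the step you flag as the crux is not merely unproven --- it is false in general, so the monotone version of the proposition you are chasing cannot be closed without extra assumptions. Concretely: take a uniform prior over two rewards $r_1,r_2$ and a two-element choice set $\{c_h,c'\}$ with $r_1(c_h)-r_1(c')=1$ and $r_2(c_h)-r_2(c')=2$, so $c_h$ is optimal under both hypotheses but with different gaps. Then $q_1(\hat{\beta})=(1+e^{-\hat{\beta}})^{-1}$ and $q_2(\hat{\beta})=(1+e^{-2\hat{\beta}})^{-1}$ both tend to $1$, the posterior is uniform at $\hat{\beta}=0$ and again uniform in the limit $\hat{\beta}\to\infty$, but is strictly non-uniform for every finite $\hat{\beta}>0$. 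Hence $H(\hat{\beta})$ dips below $\log 2$ and then rises back, so $\partial_{\hat{\beta}}H>0$ on an interval and your covariance changes sign --- even with a flat prior. The failure mode is exactly the one you half-anticipated: once $c_h$ becomes near-optimal under several hypotheses, the likelihood stops discriminating among them and the posterior relaxes back toward the prior on that set, which is entropy-increasing. Your hedge ``in the regime where $c_h$ stays discriminative'' is therefore carrying the entire proof, and no structural condition like log-supermodularity will rescue a claim that has elementary counterexamples.

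The paper does not attempt monotonicity at all; its proof is the much weaker endpoint argument. It observes that at $\hat{\beta}=0$ (with a uniform prior) the posterior is uniform, hence maximum entropy, and that as $\hat{\beta}\to\infty$ the factor $\bigl(1+\sum_{c\neq c_h}\exp(\hat{\beta}(r(c)-r(c_h)))\bigr)^{-1}$ drives $P(r\mid c_h,\hat{\beta})\to 0$ for every $r$ under which $c_h$ is suboptimal, so the posterior concentrates on $\{r: c_h\in\arg\max_c r(c)\}$ (reverting to the prior, suitably renormalized, on that set). That limiting concentration is the sense in which ``the robot becomes more confident,'' and it is consistent with the non-monotone behavior above. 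Of your fallback options, (iii) --- the $c_h$-averaged mutual-information formulation --- is the one most likely to yield a \emph{true} monotone statement, since averaging over $c_h$ washes out the per-observation non-monotonicity; but it requires establishing a Blackwell ordering of the softmax channel in $\hat{\beta}$, which is a separate (and nontrivial) lemma the paper does not supply. As written, your proposal has a genuine gap at its central step, and the gap is unfixable in the generality you set up.
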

\begin{proof}
If $\hat{\beta} =0$ and we have a uniform prior, then we have a uniform belief distribution over $r$, resulting in maximum entropy and risk-averse behavior. As $\hat{\beta}$ increases, the posterior distribution concentrates on only a small number of reward functions, resulting in lower entropy and less risk-aversion. To see this note that
\begin{align}
    \begin{split}
    P(r \mid c_h, \hat{\beta}) \propto& \frac{\exp(\hat{\beta} \cdot r(c_h))}{\sum_{c \in \mathcal{C}} \exp (\hat{\beta} \cdot r(c))} P(r)\\
    =& \frac{1}{1 + \sum_{c \neq c_h} \exp (\hat{\beta} \cdot \left( r(c) - r(c_h)) \right)} P(r)
    \end{split}
\end{align}
As $\hat{\beta} \rightarrow \infty$, we have $\exp (\hat{\beta} \cdot \left( r(c) - r(c_h)) \right) \rightarrow 0$ if $r(c) < r(c_h)$ and $\exp (\hat{\beta} \cdot \left( r(c) - r(c_h)) \right) \rightarrow \infty$ if $r(c) > r(c_h)$. Thus, $P(r \mid c_h, \hat{\beta}) \rightarrow 0$ as $\hat{\beta} \rightarrow \infty$ if $c_h$ does not maximize $r(c)$ and $P(r \mid c_h, \hat{\beta}) \propto P(r)$ if $c_h$ uniquely maximizes $r(c)$. If $c_h$ is a non-unique maximizer of $r(\phi(x,c))$, then we have $P(r \mid c_h, \hat{\beta}) \propto P(r)/|\{c : r(c) = r(c_h) \}|$.
\end{proof} 
The above result shows that, as $\hat{\beta}$ increases, the Shannon entropy decreases and the robot places high probability on a smaller set of reward functions, thereby behaving very confidently about its estimate of the reward. Finally, we have the following result for when the human makes a sub-optimal feedback choice.
\begin{proposition}
    If the human makes a suboptimal feedback choice, the likelihood of the true reward, $r^{*}$, decreases exponentially as $\hat{\beta}$ increases.
\end{proposition}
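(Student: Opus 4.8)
The plan is to show that the Boltzmann likelihood of $r^{*}$, viewed as a function of $\hat{\beta}$, is trapped between two exponentially decaying curves. First I would pin down what a ``suboptimal feedback choice'' means: there is some $c^{\star} \in \mathcal{C}$ with $r^{*}(c^{\star}) > r^{*}(c_h)$, so that the suboptimality gap $\Delta \triangleq \max_{c \in \mathcal{C}} r^{*}(c) - r^{*}(c_h)$ is strictly positive. This $\Delta$ is what will turn out to be the decay rate.

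Next, reusing the algebra from the previous proposition, I would rewrite
\[
P(c_h \mid r^{*}, \hat{\beta}) = \frac{\exp(\hat{\beta} \cdot r^{*}(c_h))}{\sum_{c \in \mathcal{C}} \exp(\hat{\beta} \cdot r^{*}(c))} = \frac{1}{\sum_{c \in \mathcal{C}} \exp\!\big(\hat{\beta}\,(r^{*}(c) - r^{*}(c_h))\big)}.
\]
The denominator contains the term for the maximizing choice $c^{\star}$, which equals exactly $\exp(\hat{\beta}\Delta)$, and every other term is nonnegative; hence the denominator is at least $\exp(\hat{\beta}\Delta)$, which gives the upper bound $P(c_h \mid r^{*}, \hat{\beta}) \le \exp(-\hat{\beta}\Delta)$. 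For a matching lower bound I would note that each of the $|\mathcal{C}|$ summands is at most $\exp(\hat{\beta}\Delta)$, so the denominator is at most $|\mathcal{C}|\exp(\hat{\beta}\Delta)$ and therefore $P(c_h \mid r^{*}, \hat{\beta}) \ge |\mathcal{C}|^{-1}\exp(-\hat{\beta}\Delta)$. Together the two bounds make ``exponentially'' literal: the likelihood decays like $e^{-\hat{\beta}\Delta}$ up to a constant factor, and in particular goes to $0$ monotonically once $\hat{\beta}$ is large enough.

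I do not expect a genuine obstacle here; the argument is essentially bookkeeping around a softmax. The one point that deserves care is choosing the right gap: the decay rate is governed by the \emph{best} alternative the human passed up, $\max_{c} r^{*}(c) - r^{*}(c_h)$, rather than by an arbitrary better alternative, and one should check that the statement still reads correctly when $c_h$ is tied with other (suboptimal) choices, since that does not change $\Delta$. If only the monotone-decrease direction is desired, the one-sided bound $P(c_h \mid r^{*}, \hat{\beta}) \le e^{-\hat{\beta}\Delta}$ already suffices and avoids invoking $|\mathcal{C}|$ at all.
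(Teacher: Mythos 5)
Your proposal is correct and takes essentially the same approach as the paper: both bound the likelihood above by dropping all denominator terms except the one for a strictly better alternative, yielding $P(c_h \mid r^{*}, \hat{\beta}) \le \exp(-\hat{\beta}\Delta)$ with $\Delta > 0$. Your matching lower bound $|\mathcal{C}|^{-1}\exp(-\hat{\beta}\Delta)$ is a small strengthening the paper omits, but the core argument is identical.
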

\begin{proof}
For a suboptimal choice $c_{h}$, $\exists c^{*}$ such that $r^{*}(c_{h}) < r^{*}(c^{*})$ and 
\begin{align}
    \begin{split}
    P(c_{h}|r^{*}, \hat{\beta}) &= \frac{\exp{(\hat{\beta} r^{*}(c_{h}))}}{\sum_{c} \exp{(\hat{\beta} r^{*}(c))}}\\
    &\leq \frac{\exp{(\hat{\beta} r^{*}(c_{h}))}}{\exp{(\hat{\beta} r^{*}(c^{*}))}}
    = \exp{(\hat{\beta}(r^{*}(c_{h})-r^{*}(c^{*})))}
    \end{split}
\end{align}
By assumption $r^{*}(c_{h})-r^{*}(c^{*}) < 0$. Therefore, the likelihood decreases exponentially as $\hat{\beta}$ increases.
\end{proof}

This analysis supports our empirical findings and shows overestimating $\hat{\beta}$ can have negative consequences since it makes the robot overconfident in the human data, potentially overfitting to mistakes. On the other hand, using a lower $\hat{\beta}$ leads to more risk-averse behavior (beneficial when the human is suboptimal), while still being optimal (under a uniform prior) when learning from perfectly-rational humans.

\subsection{Learning from Simulated Biased Feedback}\label{sec:simexpbiased}
\begin{figure*}[hbt!]
     \centering
     \begin{subfigure}[b]{0.32\textwidth}
         \centering
         \includegraphics[width=\textwidth]{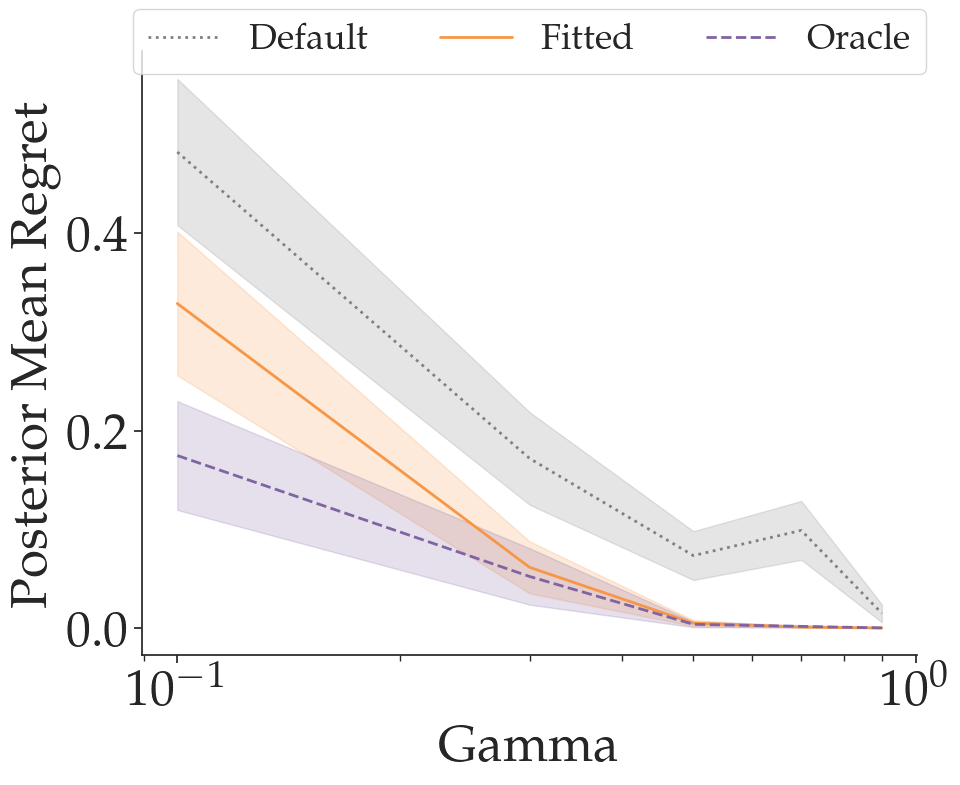}
         \caption{Myopia}
         \label{subfig:myopia}
     \end{subfigure}
     \hfill
     \begin{subfigure}[b]{0.32\textwidth}
         \centering
         \includegraphics[width=\textwidth]{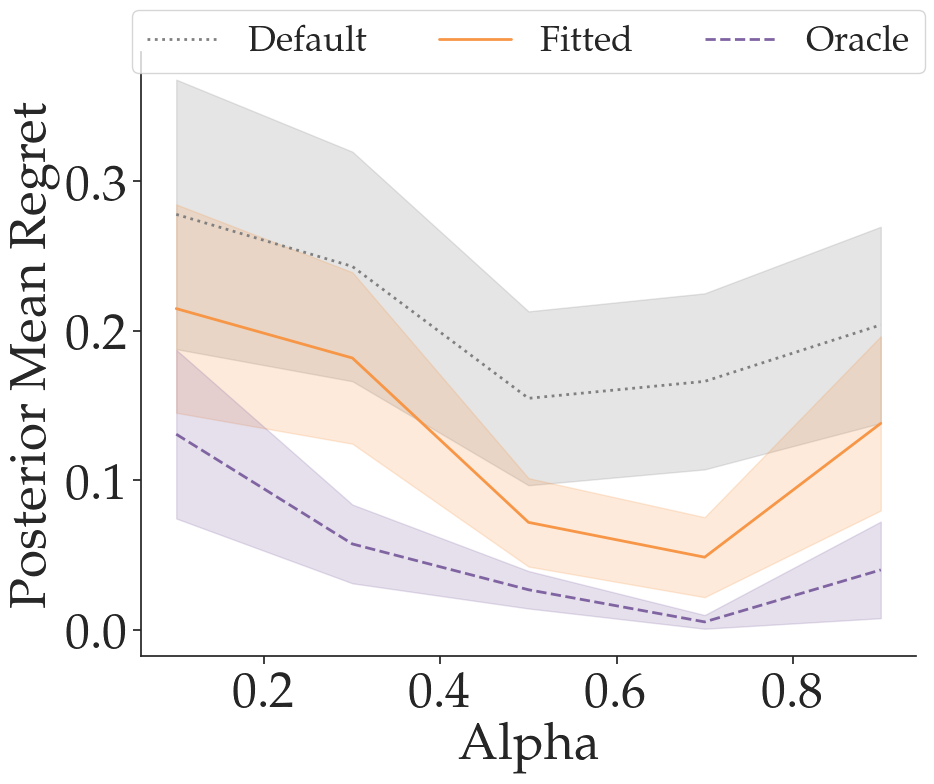}
         \caption{Extremal}
         \label{subfig:extremal}
     \end{subfigure}
     \hfill
     \begin{subfigure}[b]{0.32\textwidth}
         \centering
         \includegraphics[width=\textwidth]{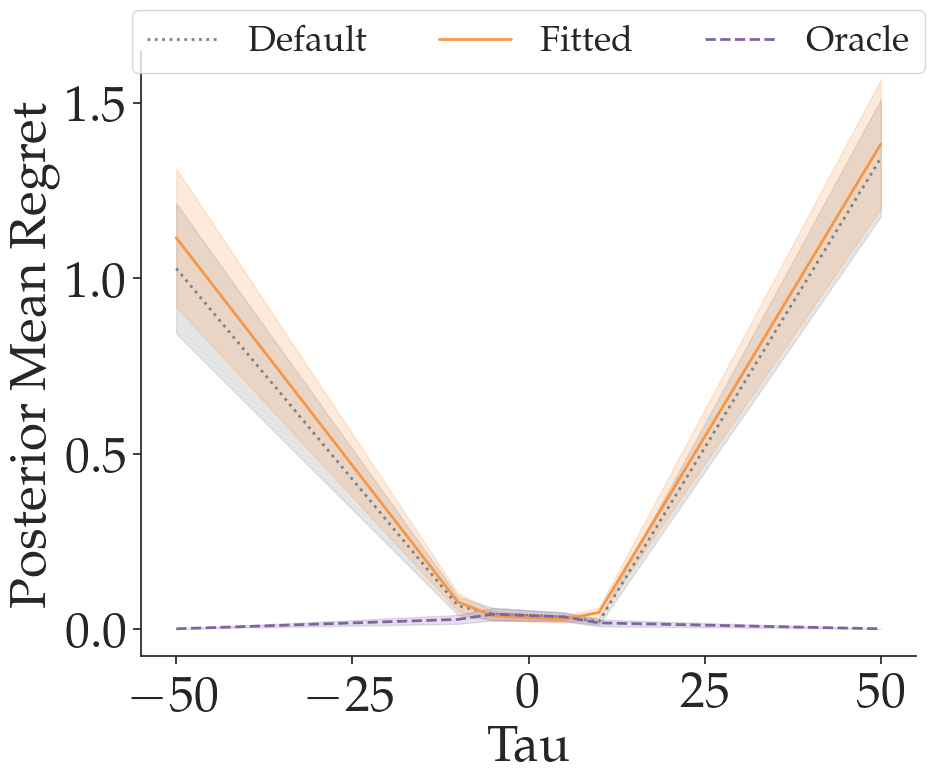}
         \caption{Optimism}
         \label{subfig:optimism}
     \end{subfigure}
        \caption{ \textbf{Reward Inference on Simulated Biased Feedback.} We compare the $\beta$-fitting method (\textit{Fitted}) introduced in this work with a \textit{Default} method which assumes $\beta=1$ and an \textit{Oracle} method that performs reward inference with access to a perfect model of the biased human. Fitted shows improvements over Default for myopia and extremal biases, but shows  no improvement for the optimism bias. 
        }
         \label{fig:biasedfeedback}
\end{figure*}
The results in Section \ref{sec:simexpboltz} demonstrate the importance of modeling a human's rationality level when the human actually is Boltzmann-rational. But of course, human behavior can suffer from systematic biases and irrationalities, not just Boltzmann (ir)rationality~\cite{evans2016learning,alanqary2021modeling}. In this section, we study whether we can use the model of Boltzmann rationality, with a learned rationality level, to improve reward inference in the presence of biases commonly exhibited in human behavior~\cite{guan2015myopic,do2008evaluations,sharot2007neural,thompson1999illusions}. \textbf{H1} hypothesizes that inferring beta can help compensate for these unmodelled aspects of human behavior and therefore lead to better reward inference, in particular when the impact of the bias is not consistent across feedback types. We first evaluate this hypothesis by generating simulated feedback with various biases unknown at reward-inference time. Later we evaluate this hypothesis with a user study. 

\vspace{1mm}
\noindent\textbf{Types of Simulated Bias.}
We study several different models of human biases. Following \cite{chan2021human}, we formalize each bias as a particular modification to the standard Bellman update, resulting in a modified value function which we use to determine the resulting policy and simulate choices from a biased human. We assume that the person is Boltzmann rational under their biased value function. Thus, $\beta$ remains a parameter, and we set it to $1$ in all cases; however, the presence of the bias means that the human is actually \textit{not} Boltzmann $\beta$-rational for any $\beta>0$.

\vspace{1mm}
\noindent
\textbf{\emph{Myopia Bias:}} Humans sometimes demonstrate myopic behavior, concentrating on immediate rewards without evaluating the longer-term impacts of their actions~\cite{guan2015myopic,grune2015models}. We simulate myopic human feedback by changing the discount factor $\gamma \in [0,1]$ and then providing Boltzmann-rational feedback with respect to the value function computed using this discount factor. 

\vspace{1mm}
\noindent
\textbf{\emph{Extremal Bias:}} Humans sometimes pay attention to high-intensity aspects of an experience, at the exclusion of lower-intensity events~\cite{do2008evaluations}. We model this behavior using a modified Bellman update 
\begin{small}
\begin{equation}\label{eq:extremal_bias}
V_{i+1}(s)=\sum\limits_{s' \in S} P(s'|s,a)\max(r(s,a), (1-\alpha) r(s,a) + \alpha V_i(s')),
\end{equation}
\end{small}
where $\alpha \in [0,1]$. As $\alpha \rightarrow 1$ the human seeks to maximize the maximum reward obtained at any point within a trajectory. As $\alpha \rightarrow 0$, the human maximizes immediate reward. 

\vspace{1mm}
\noindent\textbf{\emph{Optimism/Pessimism Bias:}}
Humans can sometimes over- or under-estimate the likelihood of experiencing a good or bad event~\cite{sharot2007neural}. We simulate this bias by changing the transition function that the biased human uses for planning to reflect the fact that the human believes that the likelihood of an outcome depends on its value:
\begin{equation}
    \tilde{P}(s' \mid s,a) \propto P(s' \mid s, a) \cdot \exp\big(\tau \cdot (r(s,a) + \gamma V_i(s'))\big),
\end{equation}
where $\tau \in \mathbb{R}$. As $\tau$ increases (decreases) the human becomes more optimistic (pessimistic).

\vspace{1mm}
\noindent\textbf{Results.}
We study three variants of reward inference: (1) \textit{Default} assumes $\hat{\beta}=1$ and performs inference under the corresponding Boltzmann-rational model of feedback, (2) \textit{Fitted} first fits the rationality parameter $\hat{\beta}$ and then performs reward inference using the $\hat{\beta}$-Boltzmann rational model, and (3) \textit{Oracle} performs reward inference using the true model of the bias.
We present the results for demonstrations in \figref{fig:biasedfeedback} and refer the reader to Appendix D for results on comparisons and e-stops. 

Our findings in \figref{fig:biasedfeedback} provide evidence for \textbf{H1}: using a learned value for $\hat{\beta}$ overall results in lower regret than using a default value. 
However, we find that there is often a large gap between the performance of Fitted and Oracle. This demonstrates that while there is utility in estimating the rationality level of the human, it is not always possible to accurately model systematically biased behavior using a tuned Boltzmann rationality model. In particular, for the Optimism bias, we find that both Fitted and Default perform similarly, and that as the $\tau$ parameter diverges from 0 (diverging from Boltzmann rationality) regret increases.

\vspace{1mm}
\noindent\textbf{Understanding the Success and Failure of Beta Fitting for Biased Human Feedback.}
\begin{figure*}[hbt!]
     \centering
     \begin{subfigure}[b]{0.24\textwidth}
         \centering
         \includegraphics[width=\textwidth]{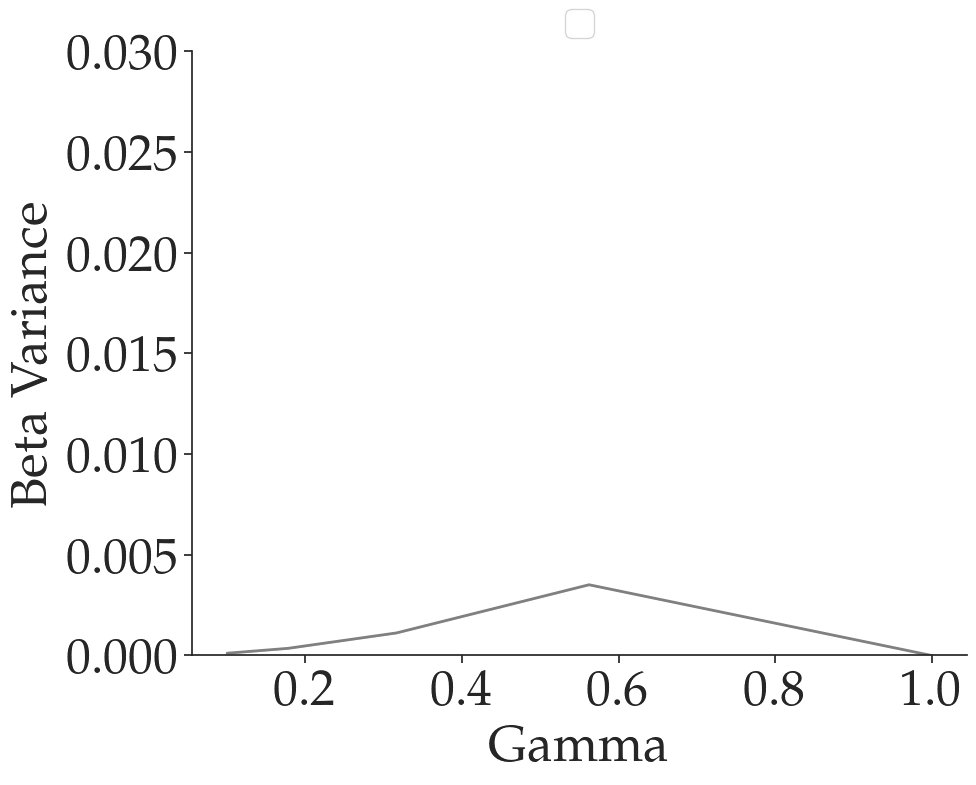}
         \caption{Myopia Beta}
         \label{subfig:myopia_beta_spread}
     \end{subfigure}
     \hfill
     \begin{subfigure}[b]{0.24\textwidth}
         \centering
         \includegraphics[width=\textwidth]{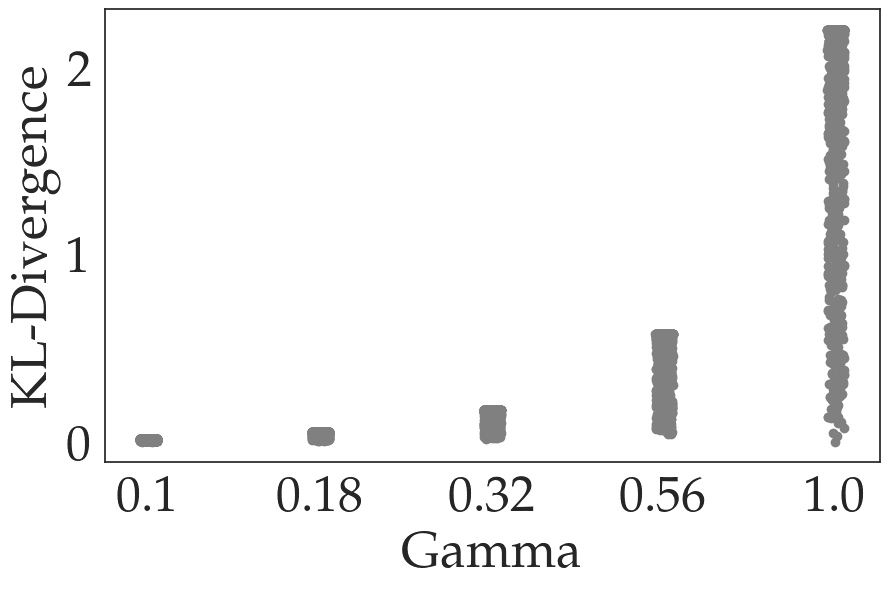}
         \caption{Myopia KL}
         \label{subfig:myopia_kl}
     \end{subfigure}
     \hfill
     \begin{subfigure}[b]{0.24\textwidth}
         \centering
         \includegraphics[width=\textwidth]{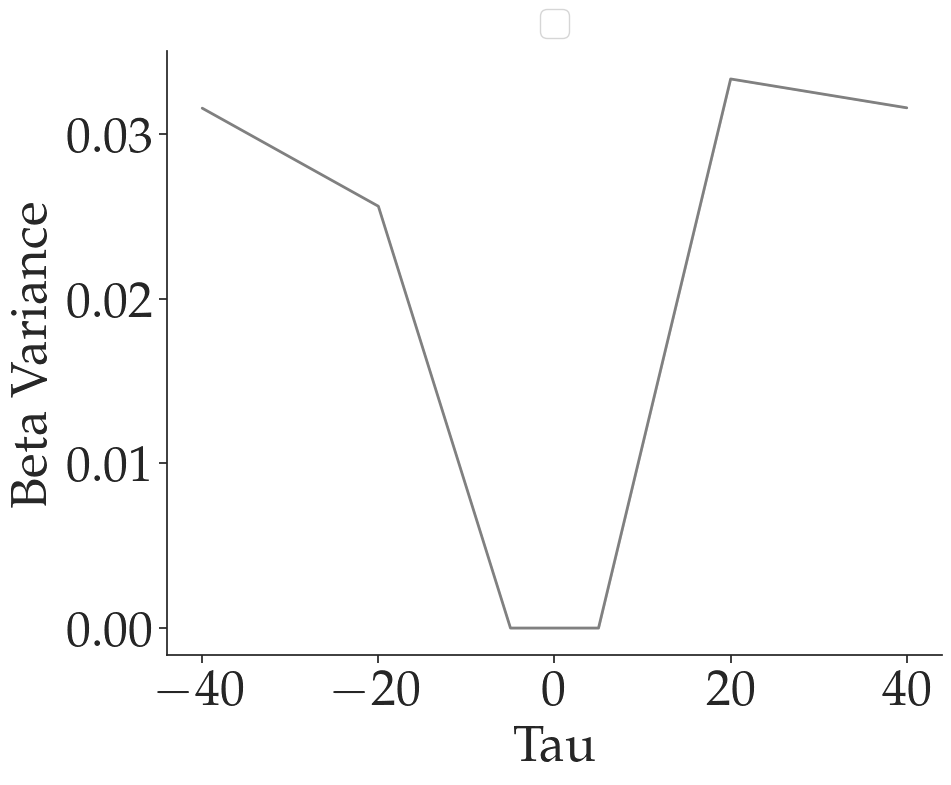}
         \caption{Optimism Beta}
         \label{subfig:optimism_beta_spread}
     \end{subfigure}
     \hfill
     \begin{subfigure}[b]{0.24\textwidth}
         \centering
         \includegraphics[width=\textwidth]{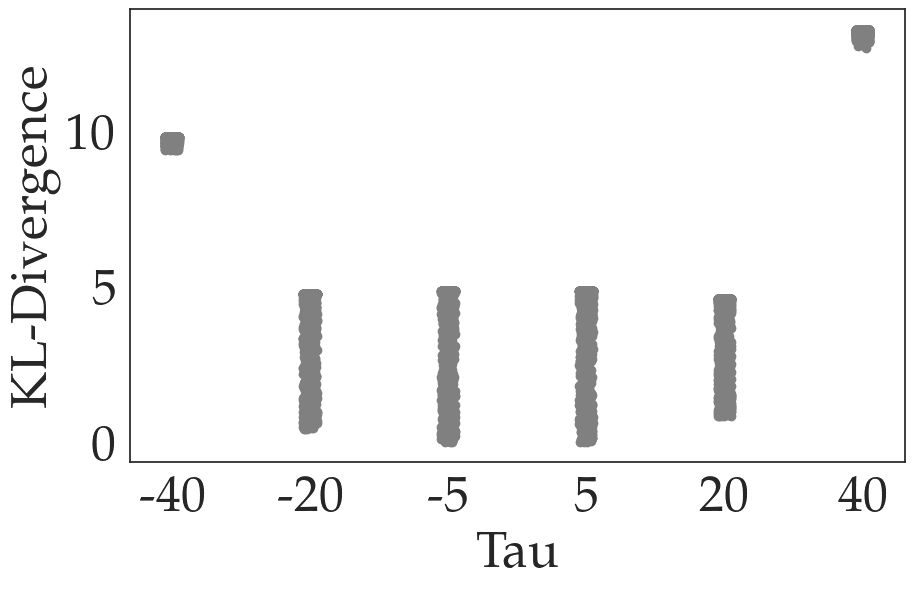}
         \caption{Optimism KL}
         \label{subfig:optimism_kl}
     \end{subfigure}
    \caption{\textbf{Quality of Fit and Generalization of Beta for Biased Demonstrations.} In (a) and (c), we compare the variance of $\hat{\beta}$ over all reward functions in our discretization, when fitting on simulated Myopic and Optimistic demonstration behavior. This indicates how well $\hat{\beta}$ generalizes across reward functions. In (b) and (d), for a fixed reward function $\theta$, we show a scatter plot of the KL-Divergence between the soft-optimal policy for every other reward function $\theta'$ and the Myopic or Optimistic policy on $\theta$. This indicates how well the respective biases are modelled by Boltzmann-rationality and how \textit{identifiable} the ground truth reward $(\theta)$ is.}
    \label{fig:kldivergence}
\end{figure*}
To understand when $\beta$-fitting improves reward inference, we study $\hat{\beta}$-generalization and quality of fit on the Myopia and Optimism biases. We consider the infinite data limit of maximum likelihood estimation, which, as shown in Appendix F,  can be calculated from the biased demonstration policy via an adapted policy evaluation technique.

In Fig.~\ref{fig:kldivergence} (a) and (c), we show the variance of the MLE $\hat{\beta}$ over different reward functions at various levels of Myopia and Optimism bias, respectively. A lower variance in this experiment implies that $\hat{\beta}$ generalizes well across different reward functions for that bias setting. The uniformly low variances for the myopia bias suggest that the fitted $\hat{\beta}$ remains consistent over different reward functions, while the higher variances for the optimism bias suggest that the fitted rationality coefficient, $\hat{\beta}$, is less transferable across different reward functions. 

In Fig.~\ref{fig:kldivergence}
(b) and (d), we fixed the ground truth reward $\theta$ and show scatter plots of the KL-Divergence between the biased policy on $\theta$ and the soft-optimal policies for a large sample of other rewards, $\theta'$. We generate these scatter plots for different bias settings. We observe that the Myopia bias has some rewards with low magnitude KL-Divergences to the biased policy, indicating that the biased policy can be fit well by Boltzmann rationality. On the other hand, for some settings of optimism bias (such as $\tau = \pm 40$), the magnitude of the KL-Divergence is uniformly high, indicating that Boltzmann rationality cannot fit the biased policy well with any reward function. Interestingly, neither a low beta variance nor a low KL-Divergence ensures that $\beta$-fitting can recover the true reward. For example, when $\gamma = 0.1$ both $\beta$ variance and KL-divergence are close to 0, yet fitted performs worse than oracle. 

In Fig.~\ref{fig:kldivergence}, we see that under some bias settings, such as $\tau = \pm 40$ or $\gamma = 0.1$, all the soft-optimal policies for the sampled rewards ($\theta'$) are roughly equidistant from the biased policy (forming a tight cluster in the scatter plot). In these settings, $\beta$-fitting fails since the true reward cannot be uniquely identified---all rewards appear to model the biased behavior equally well. Comparing Fig.~\ref{fig:kldivergence} (b) and (d), we see this situation can arise both when the biased behavior can (in the case of myopia $\gamma = 0.1)$ and cannot (in the case of optimism $\tau = \pm 40$) be modeled well by the Boltzmann distribution, as measured by the scale of the KL-Divergences.
We leave further analysis of which biases preserve reward identifiability to future work.

\section{Effect of \texorpdfstring{$\hat{\beta}$}{Beta} when Actively Learning over Multiple Feedback Types}\label{sec:active_learning}
\begin{figure*}[hbt!]
     \centering
          \begin{subfigure}[b]{.32\textwidth}
         \centering
         \includegraphics[height=.7\textwidth]{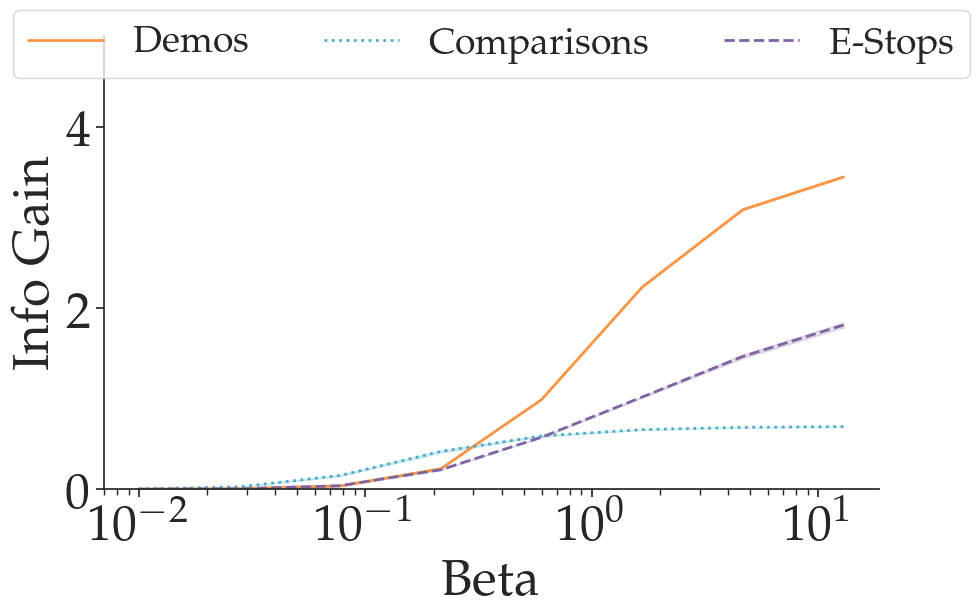}
         \caption{Info Gain vs. Rationality Level}
         \label{subfig:infogaincrossover}
     \end{subfigure}
     \hfill
     \begin{subfigure}[b]{.32\textwidth}
         \centering
         \includegraphics[height=.7\textwidth]{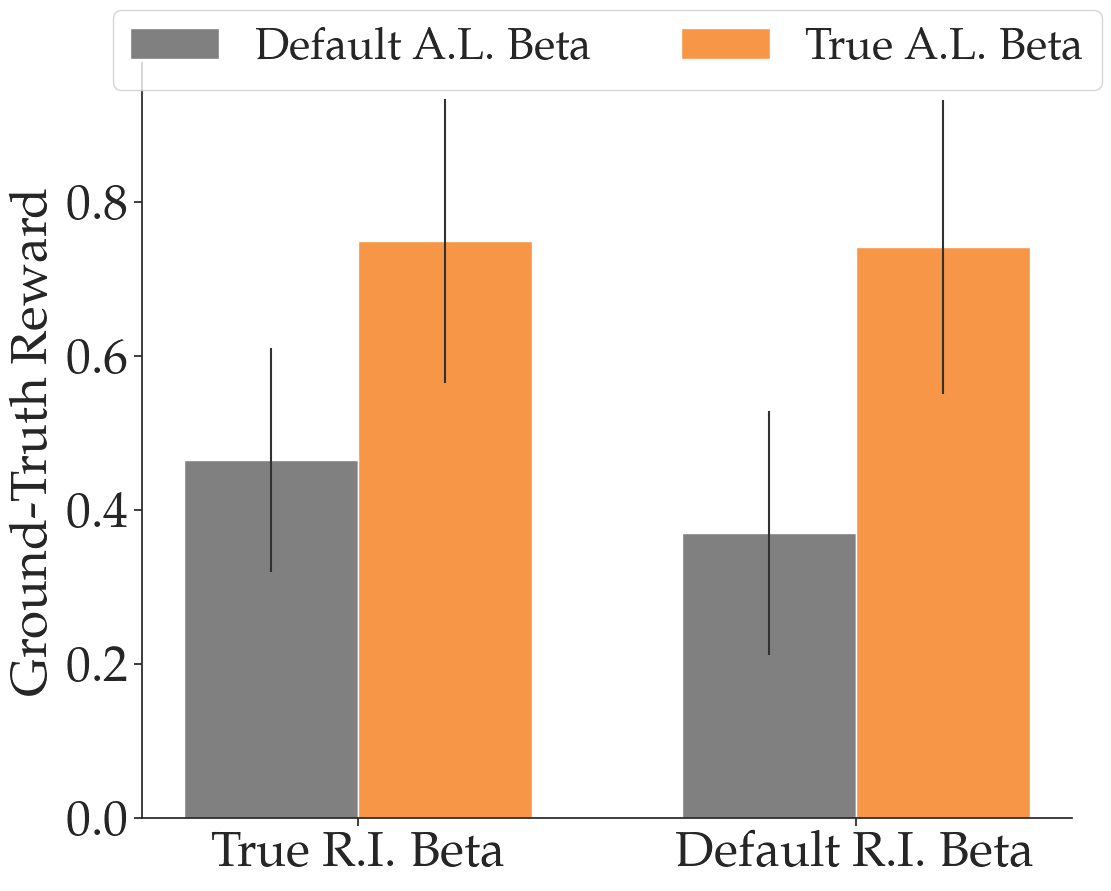}
         \caption{Active Learning Ablation}
         \label{subfig:active_sim}
     \end{subfigure}
     \hfill
     \begin{subfigure}[b]{.32\textwidth}
         \centering
         \includegraphics[height=.7\textwidth]{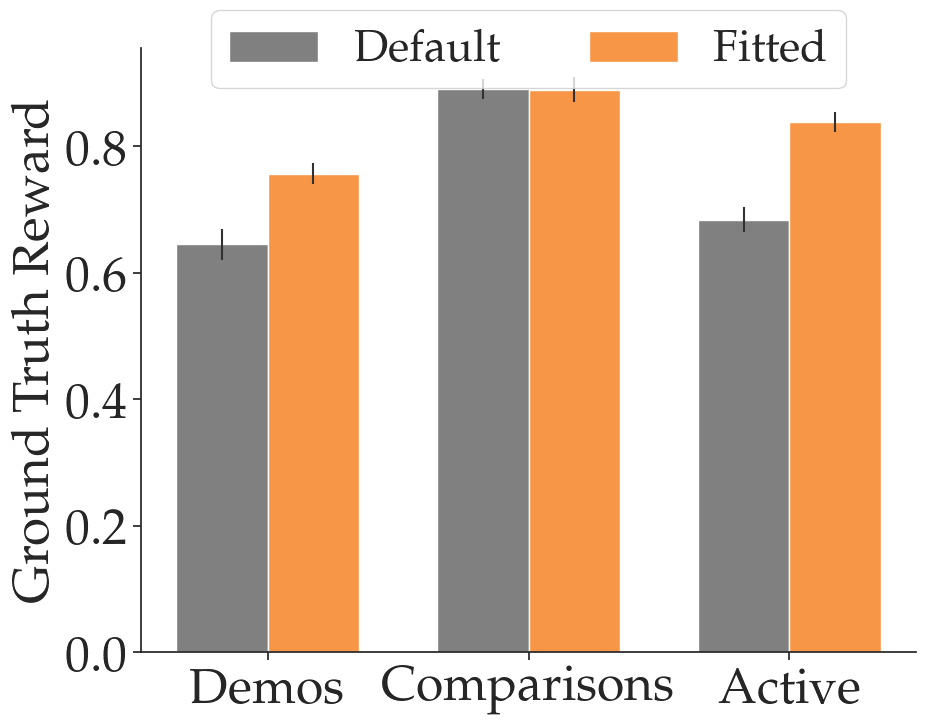}
         \caption{User Study}
         \label{subfig:active_real}
     \end{subfigure}
    \caption{\textbf{Learning Rewards from Multiple Feedback Types.} (a) Given the assumption that the human is $\beta$ rational for all three feedback types, this figure displays the information gain of the most informative design for each feedback type. We observe that one feedback type does not uniformly dominate across all rationality levels, but that the most informative feedback type is a function of human rationality. (b) We study the effect of using a misspecified $\beta$ for active learning (A.L.) and reward inference (R.I.). We find that having access to the correct $\beta$ is particularly important for active learning in this setting. 
    (c) Results from a user study show that $\beta$-fitting helps when learning from demonstrations (designed to be hard) and is no worse for comparisons (designed to be easy) and is especially beneficial when performing active learning over both demonstrations and comparisons.
    }
    \label{fig:ablation}
\end{figure*}

In the previous section, we observed fitting $\beta$ is beneficial when learning passively from a single feedback type. In this section, we consider allowing the robot to actively select the feedback it receives from a set of multiple feedback types, taking into account its current belief over the reward function. When performing this selection, an additional notion of \textit{feedback informativeness} plays a key role. We first investigate the interaction between rationality and feedback informativeness, followed by an analysis of the overall reward inference performance in the active learning setting. Ultimately, we test the following hypothesis: 

\vspace{1mm}
\noindent \textit{\textbf{H2}: Active learning that decides what feedback to ask for will perform better with a fitted beta for each type than with a default beta.}

\vspace{1mm}
\noindent\textbf{Rationality and Feedback Informativeness.}
We first examine the information gain provided by the different feedback types when the human is equally rational across alltypes. In this case, it may appear intuitive that demonstrations would uniformly provide the most information gain because they represent an implicit choice over \emph{all} possible trajectories. However, our results in \figref{subfig:infogaincrossover} reveal that the most valuable human feedback type is a function of the common rationality parameter $\beta$. While demonstrations do provide the most information when the human is highly rational, comparisons gain an advantage when querying a more irrational human. 

In Appendix E we further explore this surprising finding in a toy environment. We construct a toy reward inference environment with a finite set of reward functions and choices. The structure of the reward function and the choice set of this environment means that, given a ground truth reward, only a subset of the choices will be sensitive to correctly identifying this reward. Intuitively, this means that there will be many ``uninformative" choices in the choice set. An uninformative choice will result in poor information gain, as the posterior reward distribution will remain largely unchanged. We model \textit{demonstrations} as a feedback query where the user may choose any choice from the entire choice set. As $\beta \rightarrow 0$, demonstrations become increasingly noisy and the human converges to choosing uniformly from all choices, yielding a higher probability of making an uninformative choice and reducing the expected information gain. On the other hand, \textit{comparisons} restrict the choice set to two elements. Thus, it is possible to construct a comparison query which eliminates uninformative choices and therefore has a higher expected information gain. This analysis confirms the trend we see in \figref{subfig:infogaincrossover}, where the rationality coefficient $\beta$ has a strong influence on the informativeness of different feedback types.




    

\vspace{1mm}
\noindent\textbf{Importance of Beta Fitting for Active Reward Learning.}
In practice, a human is likely to have have varying degrees of rationality across feedback types. Intuitively, $\beta$ plays two roles in this setting: it affects the kinds of queries that are selected as well as the interpretation of the response. In order to gain a more complete understanding of the impact of $\beta$ in this regime, we seek to disentangle the relative importance of these roles. We study the reward inference performance of four variants of active learning, where each of the 2 steps (query selection and reward inference) has either the correct or default $\beta$. In \figref{fig:ablation}, we show these results for one choice of relative rationalities ($\beta_{\text{demo}}=0.1$, $\beta_{\text{comp}}=10, \beta_{\text{estop}}=1$) and in Appendix I, we consider the case where demonstrations are the most rational but all feedback rationalities are overestimated by default. We observe that the relative importance of having the correct $\beta$ for query selection vs. reward inference varies significantly between these cases. When comparisons have a much higher rationality than demonstrations (shown in \figref{fig:ablation}), the value of $\beta$ used for active learning plays a significant role in quality of reward inference, because when comparisons are selected, the default $\beta$ underestimates rationality. On the other hand, Appendix I shows that when the default values of $\beta$ are all overestimates and demonstrations are the most rational, then the $\beta$ used for reward inference has a significant effect on performance. Ultimately, our results reveal an intricate dependency of the optimal active learning strategy on the human rationality and underscores the insufficiency of generic heuristics and the importance of calibrating to individual biases and noise levels.

\section{User Study}
\label{User}
We conducted a small-scale user study of $N=7$ users (aged 21-58, mean = 28) in order to test the effect of conducting $\beta$-fitting on real-world human irrationality. The user study took place in the same grid-world navigation setup as the simulated experiments and each user provided a set of 5 comparisons and 5 demonstrations for each of 5 reward functions. In our setting, the interface for demonstrations was more challenging than that for comparisons, due to the presence of slippery dynamics in the demonstrations control interface (see Appendix J for more details). For each reward function, we tested the reward inference by using the data corresponding to the 4 other reward functions to fit $\hat{\beta}$ and then running reward inference using the individual feedback types, as well as active selection from both. The comparison of performance between using $\beta = 1$ (Default) and $\beta = \hat{\beta}$ (Fitted) are shown in Figure 4(c). We observe that the results validate both \textbf{H1} and \textbf{H2}. $\beta$-fitting on demonstrations results in better performance than using the default $\beta$ and we observe a particularly large benefit from fitting beta in the active learning setting. For comparisons, we observed that the users were able to perform close-to-optimally, which lessened the importance of modeling the rationality level. 

\section{Discussion}\label{sec:discussion}
\noindent \textbf{Summary:} 
In this work, we examine the importance of modeling the level of human rationality when learning from multiple kinds of human feedback. We demonstrate the importance of utilizing the correct rationality coefficient in cases where the human is Boltzmann-rational (with an unknown rationality level), as well in cases where the human is \textit{not} Boltzmann-rational, but is systematically biased. Finally, we demonstrate that $\beta$-fitting is especially important when performing active learning: in a user study we find that active queries based on learned rationality levels significantly outperform an active learning baseline that uses a uniform, default level of rationality across feedback types. 

\vspace{1mm}
\noindent \textbf{Limitations and Future Work:} Our  contribution is studying the importance of having an estimate of $\beta$ (the human's level of rationality), but how exactly to get that remains an open question---our experiments use calibration data, assuming that we can ``incept" a calibration reward function into a human's head and then ask them to provide feedback. We note that this type of calibration approach has been shown to work well in some settings, such as humans interacting with a driving simulator~\cite{schrum2022mind}; however, in other settings, this type of calibration may be difficult, and future work includes studying $\beta$-fitting techniques that do not require providing the human with an explicit calibration reward function. Furthermore, while we study the benefits of $\beta$-fitting on actively learning from multiple feedback types, we have only modeled the rationality level of each feedback type, ignoring the query cost in terms of cognitive burden and time required. Future work includes learning models of the cognitive burden per user and per feedback type, incorporating cognitive and feedback-time costs into active learning, and analyzing $\beta$-fitting in more domains.
\section*{Ethics Statement}
Our work seeks to approximate potentially biased behavior with noisily rational behavior. This could have negative societal impacts if it leads a robot to incorrectly infer human intent, especially in safety critical settings. While our results show that $\beta$-fitting is useful, we caution against simply forcing robots to view all human behavior as $\beta$-rational---using more nuanced and sophisticated models of human bias and irrationality is an important area of future work.
\section*{Acknowledgments}
We thank the members of the InterACT lab for helpful discussion and advice. This work was
supported by the ONR Young Investigator Program (YIP).
\bibliography{references}

\begin{thebibliography}{43}
\providecommand{\natexlab}[1]{#1}

\bibitem[{Alanqary et~al.(2021)Alanqary, Lin, Le, Zhi-Xuan, Mansinghka, and
  Tenenbaum}]{alanqary2021modeling}
Alanqary, A.; Lin, G.~Z.; Le, J.; Zhi-Xuan, T.; Mansinghka, V.~K.; and
  Tenenbaum, J.~B. 2021.
\newblock Modeling the Mistakes of Boundedly Rational Agents Within a Bayesian
  Theory of Mind.

\bibitem[{Bajcsy et~al.(2017)Bajcsy, Losey, O’Malley, and
  Dragan}]{bajcsy2017learning}
Bajcsy, A.; Losey, D.~P.; O’Malley, M.~K.; and Dragan, A.~D. 2017.
\newblock Learning robot objectives from physical human interaction.
\newblock In \emph{Conference on Robot Learning}, 217--226. PMLR.

\bibitem[{Baker, Tenenbaum, and Saxe(2007)}]{baker2007goal}
Baker, C.~L.; Tenenbaum, J.~B.; and Saxe, R.~R. 2007.
\newblock Goal inference as inverse planning.
\newblock In \emph{Proceedings of the Annual Meeting of the Cognitive Science
  Society}, volume~29.

\bibitem[{B{\i}y{\i}k et~al.(2020)B{\i}y{\i}k, Losey, Palan, Landolfi,
  Shevchuk, and Sadigh}]{biyik2020learning}
B{\i}y{\i}k, E.; Losey, D.~P.; Palan, M.; Landolfi, N.~C.; Shevchuk, G.; and
  Sadigh, D. 2020.
\newblock Learning reward functions from diverse sources of human feedback:
  Optimally integrating demonstrations and preferences.
\newblock \emph{arXiv preprint arXiv:2006.14091}.

\bibitem[{Bradley and Terry(1952)}]{bradley1952rank}
Bradley, R.~A.; and Terry, M.~E. 1952.
\newblock Rank analysis of incomplete block designs: I. The method of paired
  comparisons.
\newblock \emph{Biometrika}, 39(3/4): 324--345.

\bibitem[{Brown et~al.(2020)Brown, Coleman, Srinivasan, and
  Niekum}]{brown2020safe}
Brown, D.; Coleman, R.; Srinivasan, R.; and Niekum, S. 2020.
\newblock Safe imitation learning via fast bayesian reward inference from
  preferences.
\newblock In \emph{International Conference on Machine Learning}, 1165--1177.
  PMLR.

\bibitem[{Brown and Niekum(2018)}]{brown2018efficient}
Brown, D.; and Niekum, S. 2018.
\newblock Efficient probabilistic performance bounds for inverse reinforcement
  learning.
\newblock In \emph{Proceedings of the AAAI Conference on Artificial
  Intelligence}, volume~32.

\bibitem[{Chaloner and Verdinelli(1995)}]{chaloner1995bayesian}
Chaloner, K.; and Verdinelli, I. 1995.
\newblock Bayesian experimental design: A review.
\newblock \emph{Statistical Science}, 273--304.

\bibitem[{Chan, Critch, and Dragan(2021)}]{chan2021human}
Chan, L.; Critch, A.; and Dragan, A. 2021.
\newblock Human irrationality: both bad and good for reward inference.
\newblock \emph{arXiv preprint arXiv:2111.06956}.

\bibitem[{Christiano et~al.(2017)Christiano, Leike, Brown, Martic, Legg, and
  Amodei}]{christiano2017deep}
Christiano, P.~F.; Leike, J.; Brown, T.; Martic, M.; Legg, S.; and Amodei, D.
  2017.
\newblock Deep reinforcement learning from human preferences.
\newblock \emph{Advances in neural information processing systems}, 30.

\bibitem[{Cui and Niekum(2018)}]{cui2018active}
Cui, Y.; and Niekum, S. 2018.
\newblock Active reward learning from critiques.
\newblock In \emph{2018 IEEE international conference on robotics and
  automation (ICRA)}, 6907--6914. IEEE.

\bibitem[{Do, Rupert, and Wolford(2008)}]{do2008evaluations}
Do, A.~M.; Rupert, A.~V.; and Wolford, G. 2008.
\newblock Evaluations of pleasurable experiences: The peak-end rule.
\newblock \emph{Psychonomic bulletin \& review}, 15(1): 96--98.

\bibitem[{Evans, Stuhlm\"{u}ller, and Goodman(2016)}]{evans2016learning}
Evans, O.; Stuhlm\"{u}ller, A.; and Goodman, N.~D. 2016.
\newblock Learning the Preferences of Ignorant, Inconsistent Agents.
\newblock In \emph{Proceedings of the Thirtieth AAAI Conference on Artificial
  Intelligence}, AAAI'16, 323–329. AAAI Press.

\bibitem[{Finn, Levine, and Abbeel(2016)}]{finn2016guided}
Finn, C.; Levine, S.; and Abbeel, P. 2016.
\newblock Guided cost learning: Deep inverse optimal control via policy
  optimization.
\newblock In \emph{International conference on machine learning}, 49--58. PMLR.

\bibitem[{Gr{\"u}ne-Yanoff(2015)}]{grune2015models}
Gr{\"u}ne-Yanoff, T. 2015.
\newblock Models of temporal discounting 1937--2000: An interdisciplinary
  exchange between economics and psychology.
\newblock \emph{Science in context}, 28(4): 675--713.

\bibitem[{Guan et~al.(2015)Guan, Cheng, Fan, and Li}]{guan2015myopic}
Guan, S.; Cheng, L.; Fan, Y.; and Li, X. 2015.
\newblock Myopic decisions under negative emotions correlate with altered time
  perception.
\newblock \emph{Frontiers in Psychology}, 6: 468.

\bibitem[{Haarnoja et~al.(2017)Haarnoja, Tang, Abbeel, and
  Levine}]{haarnoja2017reinforcement}
Haarnoja, T.; Tang, H.; Abbeel, P.; and Levine, S. 2017.
\newblock Reinforcement learning with deep energy-based policies.
\newblock In \emph{International Conference on Machine Learning}, 1352--1361.
  PMLR.

\bibitem[{Hadfield-Menell et~al.(2017{\natexlab{a}})Hadfield-Menell, Dragan,
  Abbeel, and Russell}]{hadfield2017off}
Hadfield-Menell, D.; Dragan, A.; Abbeel, P.; and Russell, S.
  2017{\natexlab{a}}.
\newblock The off-switch game.
\newblock In \emph{Workshops at the Thirty-First AAAI Conference on Artificial
  Intelligence}.

\bibitem[{Hadfield-Menell et~al.(2017{\natexlab{b}})Hadfield-Menell, Milli,
  Abbeel, Russell, and Dragan}]{hadfield2017inverse}
Hadfield-Menell, D.; Milli, S.; Abbeel, P.; Russell, S.~J.; and Dragan, A.
  2017{\natexlab{b}}.
\newblock Inverse reward design.
\newblock \emph{Advances in neural information processing systems}, 30.

\bibitem[{Henry et~al.(2010)Henry, Vollmer, Ferris, and
  Fox}]{henry2010learning}
Henry, P.; Vollmer, C.; Ferris, B.; and Fox, D. 2010.
\newblock Learning to navigate through crowded environments.
\newblock In \emph{2010 IEEE International Conference on Robotics and
  Automation}, 981--986. IEEE.

\bibitem[{Ibarz et~al.(2018)Ibarz, Leike, Pohlen, Irving, Legg, and
  Amodei}]{ibarz2018reward}
Ibarz, B.; Leike, J.; Pohlen, T.; Irving, G.; Legg, S.; and Amodei, D. 2018.
\newblock Reward learning from human preferences and demonstrations in Atari.
\newblock \emph{arXiv preprint arXiv:1811.06521}.

\bibitem[{Jeon, Milli, and Dragan(2020)}]{jeon2020reward}
Jeon, H.~J.; Milli, S.; and Dragan, A.~D. 2020.
\newblock Reward-rational (implicit) choice: A unifying formalism for reward
  learning.
\newblock \emph{arXiv preprint arXiv:2002.04833}.

\bibitem[{Jonnavittula and Losey(2021)}]{jonnavittula2021know}
Jonnavittula, A.; and Losey, D.~P. 2021.
\newblock I know what you meant: Learning human objectives by (under)
  estimating their choice set.
\newblock In \emph{2021 IEEE International Conference on Robotics and
  Automation (ICRA)}, 2747--2753. IEEE.

\bibitem[{Kalman(1964)}]{kalman1964linear}
Kalman, R.~E. 1964.
\newblock When is a linear control system optimal?
\newblock \emph{Transactions ASME, Journal Basic Engineering., 86:51–60,}.

\bibitem[{Kitani et~al.(2012)Kitani, Ziebart, Bagnell, and
  Hebert}]{kitani2012activity}
Kitani, K.~M.; Ziebart, B.~D.; Bagnell, J.~A.; and Hebert, M. 2012.
\newblock Activity forecasting.
\newblock In \emph{European conference on computer vision}, 201--214. Springer.

\bibitem[{Knox and Stone(2009)}]{knox2009interactively}
Knox, W.~B.; and Stone, P. 2009.
\newblock Interactively shaping agents via human reinforcement: The TAMER
  framework.
\newblock In \emph{Proceedings of the fifth international conference on
  Knowledge capture}, 9--16.

\bibitem[{Kretzschmar et~al.(2016)Kretzschmar, Spies, Sprunk, and
  Burgard}]{kretzschmar2016socially}
Kretzschmar, H.; Spies, M.; Sprunk, C.; and Burgard, W. 2016.
\newblock Socially compliant mobile robot navigation via inverse reinforcement
  learning.
\newblock \emph{The International Journal of Robotics Research}, 35(11):
  1289--1307.

\bibitem[{Liu et~al.(2023)Liu, Datta, Novoseller, and Brown}]{liu2023efficient}
Liu, Y.; Datta, G.; Novoseller, E.; and Brown, D.~S. 2023.
\newblock Efficient Preference-Based Reinforcement Learning Using Learned
  Dynamics Models.
\newblock In \emph{International Conference on Robotics and Automation (ICRA)}.
  IEEE.

\bibitem[{Mainprice, Hayne, and Berenson(2015)}]{mainprice2015predicting}
Mainprice, J.; Hayne, R.; and Berenson, D. 2015.
\newblock Predicting human reaching motion in collaborative tasks using inverse
  optimal control and iterative re-planning.
\newblock In \emph{2015 IEEE International Conference on Robotics and
  Automation (ICRA)}, 885--892. IEEE.

\bibitem[{Matuszek et~al.(2012)Matuszek, FitzGerald, Zettlemoyer, Bo, and
  Fox}]{matuszek2012joint}
Matuszek, C.; FitzGerald, N.; Zettlemoyer, L.; Bo, L.; and Fox, D. 2012.
\newblock A joint model of language and perception for grounded attribute
  learning.
\newblock \emph{arXiv preprint arXiv:1206.6423}.

\bibitem[{Morgenstern and Von~Neumann(1953)}]{morgenstern1953theory}
Morgenstern, O.; and Von~Neumann, J. 1953.
\newblock \emph{Theory of games and economic behavior}.
\newblock Princeton university press.

\bibitem[{Ng, Russell et~al.(2000)}]{ng2000algorithms}
Ng, A.~Y.; Russell, S.~J.; et~al. 2000.
\newblock Algorithms for inverse reinforcement learning.
\newblock In \emph{Icml}, volume~1, 2.

\bibitem[{Palan et~al.(2019)Palan, Shevchuk, Charles~Landolfi, and
  Sadigh}]{palan2019learning}
Palan, M.; Shevchuk, G.; Charles~Landolfi, N.; and Sadigh, D. 2019.
\newblock Learning Reward Functions by Integrating Human Demonstrations and
  Preferences.
\newblock In \emph{Robotics: Science and Systems}.

\bibitem[{Ramachandran and Amir(2007)}]{ramachandran2007bayesian}
Ramachandran, D.; and Amir, E. 2007.
\newblock Bayesian Inverse Reinforcement Learning.
\newblock In \emph{IJCAI}, volume~7, 2586--2591.

\bibitem[{Ratliff, Bagnell, and Zinkevich(2006)}]{ratliff2006maximum}
Ratliff, N.~D.; Bagnell, J.~A.; and Zinkevich, M.~A. 2006.
\newblock Maximum margin planning.
\newblock In \emph{Proceedings of the 23rd international conference on Machine
  learning}, 729--736.

\bibitem[{Schrum et~al.(2022)Schrum, Hedlund-Botti, Moorman, and
  Gombolay}]{schrum2022mind}
Schrum, M.~L.; Hedlund-Botti, E.; Moorman, N.; and Gombolay, M.~C. 2022.
\newblock Mind meld: Personalized meta-learning for robot-centric imitation
  learning.
\newblock In \emph{2022 17th ACM/IEEE International Conference on Human-Robot
  Interaction (HRI)}, 157--165. IEEE.

\bibitem[{Shah et~al.(2019)Shah, Gundotra, Abbeel, and
  Dragan}]{shah2019feasibility}
Shah, R.; Gundotra, N.; Abbeel, P.; and Dragan, A. 2019.
\newblock On the feasibility of learning, rather than assuming, human biases
  for reward inference.
\newblock In \emph{International Conference on Machine Learning}, 5670--5679.
  PMLR.

\bibitem[{Sharot et~al.(2007)Sharot, Riccardi, Raio, and
  Phelps}]{sharot2007neural}
Sharot, T.; Riccardi, A.~M.; Raio, C.~M.; and Phelps, E.~A. 2007.
\newblock Neural mechanisms mediating optimism bias.
\newblock \emph{Nature}, 450(7166): 102--105.

\bibitem[{Thompson(1999)}]{thompson1999illusions}
Thompson, S.~C. 1999.
\newblock Illusions of control: How we overestimate our personal influence.
\newblock \emph{Current Directions in Psychological Science}, 8(6): 187--190.

\bibitem[{Vasquez, Okal, and Arras(2014)}]{vasquez2014inverse}
Vasquez, D.; Okal, B.; and Arras, K.~O. 2014.
\newblock Inverse reinforcement learning algorithms and features for robot
  navigation in crowds: an experimental comparison.
\newblock In \emph{2014 IEEE/RSJ International Conference on Intelligent Robots
  and Systems}, 1341--1346. IEEE.

\bibitem[{Wirth et~al.(2017)Wirth, Akrour, Neumann, F{\"u}rnkranz
  et~al.}]{wirth2017survey}
Wirth, C.; Akrour, R.; Neumann, G.; F{\"u}rnkranz, J.; et~al. 2017.
\newblock A survey of preference-based reinforcement learning methods.
\newblock \emph{Journal of Machine Learning Research}.

\bibitem[{Wulfmeier, Ondruska, and Posner(2015)}]{wulfmeier2015maximum}
Wulfmeier, M.; Ondruska, P.; and Posner, I. 2015.
\newblock Maximum entropy deep inverse reinforcement learning.
\newblock \emph{arXiv preprint arXiv:1507.04888}.

\bibitem[{Ziebart et~al.(2008)Ziebart, Maas, Bagnell, Dey
  et~al.}]{ziebart2008maximum}
Ziebart, B.~D.; Maas, A.~L.; Bagnell, J.~A.; Dey, A.~K.; et~al. 2008.
\newblock Maximum entropy inverse reinforcement learning.
\newblock In \emph{Aaai}, volume~8, 1433--1438. Chicago, IL, USA.

\end{thebibliography}
\newpage

\appendix

\onecolumn

\section{Expected Information Gain of Design}\label{app:infogain_derivation}
In actively selecting over feedback types, we evaluate feedback designs $x \in \mathcal{X}$ by expected information gain from querying the human for feedback $x$. In Table~1, we provide a reference of the different feedback types, their designs, and choice sets. Concretely active learning over feedback types can be written as:
\begin{equation*}
        \max_{x \in \mathcal{X}} \mathbb{E}_{y \sim P(y|x})[D_{KL}(P(\theta | y, x)||P(\theta)]
\end{equation*}
Using the definition of the KL-Divergence, we have that
\begin{align*}
    \mathbb{E}_{y \sim P(y|x})[D_{KL}(P(\theta | y, x)||P(\theta)]\\ = \mathbb{E}_{y \sim P(y|x)} [\mathbb{E}_{\theta \sim P(\theta|y,x)}[\log P(\theta|y,x) - \log P(\theta)]]\\
    = \mathbb{E}_{y \sim P(y|x)}[-H(P(\theta|y,x))- \mathbb{E}_{\theta \sim P(\theta|y,x)}[\log P(\theta)]]\\
    = \mathbb{E}_{y \sim P(y|x)}[-H(P(\theta|y,x))] - \mathbb{E}_{\theta, y \sim P(\theta, y|x)}[\log P(\theta)]\\
    = \mathbb{E}_{y \sim P(y|x)}[-H(P(\theta|y,x))] + H(P(\theta))\\
    = \mathbb{E}_{y\sim P(y|x)}[H(P(\theta))-H(P(\theta|y,x)]
\end{align*}
Furthermore, we wish to avoid calculating the posterior distribution $P(\theta|y,x)$ since we do not explicitly have access to this distribution. By expanding the entropy term 
\begin{align*}
    \mathbb{E}_{y \sim P(y|x)}[\mathbb{E}_{\theta \sim P(\theta|y,x)}[\log \frac{P(\theta|y,x)}{P(\theta)}]]\\ = \mathbb{E}_{y,\theta \sim P(y,\theta|x)}[\log \frac{P(\theta|y,x)}{P(\theta)}]\\ 
    = \mathbb{E}_{y,\theta \sim P(y,\theta|x)}[\log \frac{P(\theta|y, x)}{P(\theta|x)}]\\
    = \mathbb{E}_{y, \theta \sim P(y,\theta|x)}[\log \frac{P(y|x,\theta)}{P(y|x)}]\\
    = \mathbb{E}_{\theta \sim P(\theta|x)}[\mathbb{E}_{y \sim P(y|x,\theta)}[\log \frac{P(y|x,\theta)}{P(y|x)}]]\\
    = \mathbb{E}_{\theta \sim P(\theta|x)}[\mathbb{E}_{y \sim P(y|x,\theta)}[\log \frac{P(y|x,\theta)}{\sum\limits_{\hat{\theta}}P(y|x,\theta) P(\theta|x)}]]
\end{align*}
Thus, we see that this can be computed given the set of $\theta$ and the observation model for the human feedback $P(y|x,\theta)$ which are both available explicitly.

We observe that this expression is in terms of the observation model for the human feedback $P(c_h|x, \theta)$, along with the prior $P(\theta)$. In the case of comparisons and e-stops, the inner expectation can be computed exactly due to the small space of feedback that may be received from the human. In the case of demonstrations, we approximate the inner expectation by sampling a small number of demonstrations for each possible pair $(x, \theta)$.

\begin{table*}
\label{tab:feedbacktypes}
\footnotesize
\centering
    \caption{Example feedback types and groundings.}\label{tab:feedback_types}
\begin{tabular}{llll}
\toprule 
\bfseries Feedback Type  & \bfseries Possible Designs $\mathcal{X}$   & \bfseries Choice Set $\mathcal{C}(x)$  & \bfseries Grounding $\varphi(x, c)$ \\
 & & for $x \in \mathcal{X}$ &for $x \in \mathcal{X}, c \in \mathcal{C}(x)$   \\
 \midrule 
              
Demonstration & Starting states:               & Available actions:                  & $\varphi(s, a) = ((s,a))$    \\
              & $\mathcal{X} = \mathcal{S}$    & $\mathcal{C}(s) = \mathcal{A}$ &                                                                                         \\
              
\midrule 

Comparison    & \makecell[l]{Pairs of trajectories:\\ $\mathcal{X} =  \Xi^2$}         & \makecell[l]{Which trajectory:\\   $\mathcal{C}((\xi_A, \xi_B)) = \{A, B\}$   }                              & $\varphi((\xi_A, \xi_B), c) = \begin{cases} \xi_A, & c = A \\ \xi_B, & c = B \end{cases}$ \\

\midrule 
E-Stop        &      \makecell[l]{Trajectories: \\ $\mathcal{X} = \Xi$}                          & \makecell[l]{Stopping time: \\ $C(\xi) = \{0, \ldots, T  \}$}                                     &    $\varphi(\xi,t) = \xi_{0:t}$     \\                                                          \bottomrule 
\end{tabular}
\end{table*}
\section{Experimental Setup}\label{app:gridworld}
\subsection{Environment Details}
\begin{figure*}
    \centering
    \includegraphics[width=0.3\textwidth]{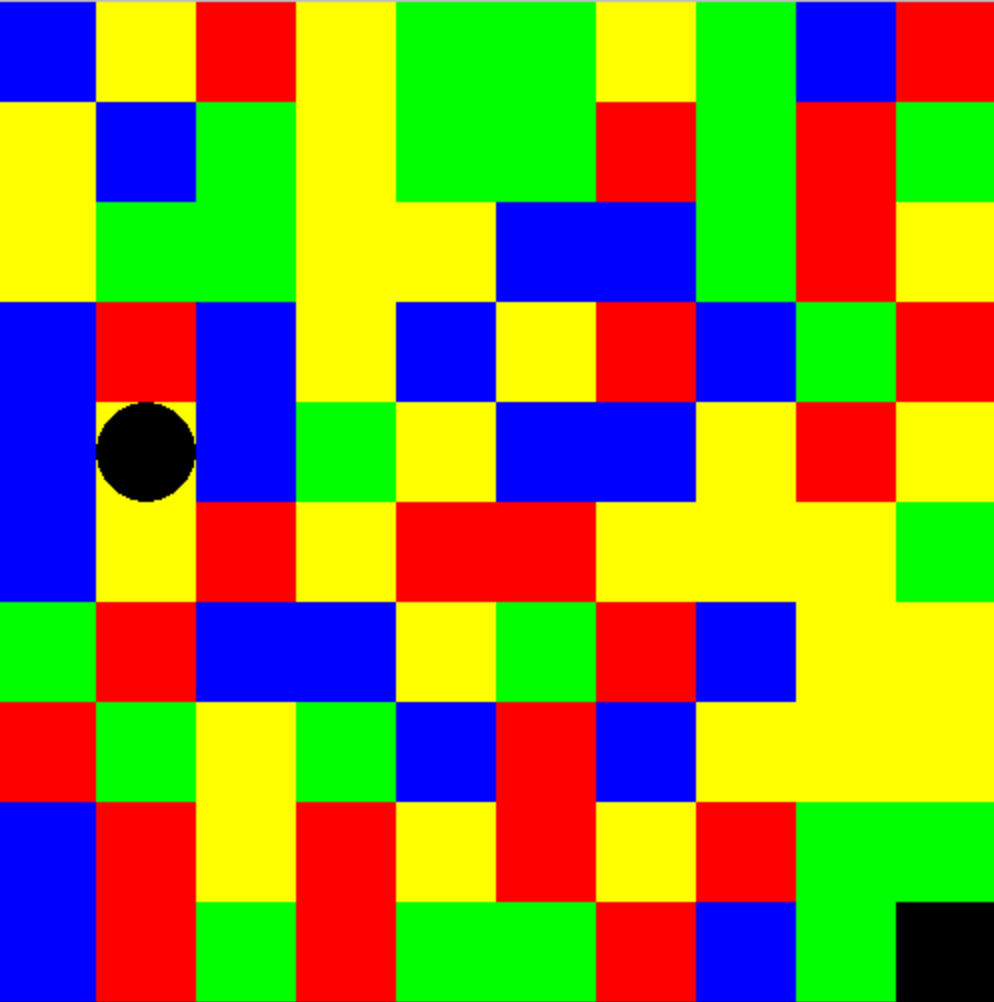}
    \caption{\textbf{Example Colored Grid-world Environment.} Here we show an example environment where our user study and simulation experiments take place. The black circle represents the position of the agent and the black bottom-right corner square shows the goal-state. All other tiles have one of four colors.}
    \label{fig:appendixenvironment}
\end{figure*}
Our user study and simulation experiments take place in a 10 by 10 colored grid-world environment, with each tile having one of four colors and the bottom-right corner being a terminal goal state. An example is shown in Figure \ref{fig:appendixenvironment}. We consider a space of reward functions that assign each tile color to a reward earned for landing on that color. For simplicity, we consider the space of normalized reward functions; the reward functions have unit $\ell_{2}$ norm when considered as four-dimensional vectors $\begin{bmatrix}R_{\textrm{red}} \\ R_{\textrm{blue}}\\R_{\textrm{green}}\\ R_{\textrm{yellow}} \end{bmatrix}$.

\subsection{Reward Inference Performance Metrics}
In this work, we consider two metrics for the success of reward inference metric, as described below: Here $R_{\textrm{inferred}}$ refers to the posterior mean reward and $\theta_{\textrm{inferred}}$ refers to the posterior mean reward vector.\\
\paragraph{Reward Regret} The final goal of reward inference is often to train an agent to maximize the human's reward function. In this setting, the actual values of the reward parameters are irrelevant so long as they induce a similar optimal behavior to the ground-truth reward function. We measure the \textbf{normalized regret}-the amount of reward the agent misses by following the inferred reward- as:
\begin{equation}
\textrm{Regret} = 1-\frac{R_{\textrm{inferrd}}-R_{\textrm{random}}}{R_{\textrm{true}}-R_{\textrm{random}}}
\end{equation}
where $R_{\textrm{inferred}}$ is the reward attained by the optimal policy of the inferred reward under the ground truth reward, $R_{\textrm{random}}$ is the reward attained by a random policy under the ground-truth reward, and $R_{\textrm{True}}$ is the reward attained by the ground-truth reward under the ground-truth reward. 
\paragraph{Reward MSE} Alternatively, we might be interested in how closely the inferred reward parameters match those of the ground-truth. In order to measure this, we computed a reward mean squared error by treating the reward functions as vectors in $\theta \in \mathbb{R}^{4}$, $\theta = \begin{bmatrix} R^{\theta}_{\textrm{red}} \\ R^{\theta}_{\textrm{blue}}\\R^{\theta}_{\textrm{green}}\\R^{\theta}_{\textrm{yellow}} \end{bmatrix}$. Then, we can compute \textbf{reward mean-squared-error} as:
\begin{equation}
    \textrm{MSE} = ||\theta_{\textrm{inferred}} - \theta_{\textrm{ground-truth}}||_{2}^{2}
\end{equation}
In the main text of the paper, we focused on the normalized regret metric, but as we show in the Appendix, our observations correlate well to the reward MSE in general.
\paragraph{Error Bars}
All error bars and ribbons corresponded to the standard error of the mean (SEM). The simulation results have the average and standard error of the mean computed over 10 random reward functions and 2 seeds. The user study results have the average and standard error of the mean computed over the 35 trials performed over 7 subjects.

\subsection{Hyperparameters}
For all simulated experiments, we used the same hyper-parameter settings. All trajectories took place on a 10-by-10 Grid-world with a 4-dimensional reward vector, all lying on the unit sphere. There was a 0.1 transition noise present in the environment. For the purposes of maintaining distributions over reward functions, we discretized this unit sphere by a grid of 1000 points. All experiments were averaged over 10 random seeds.

\section{Reward Inference for Boltzmann Rational Human Feedback}\label{app:other_perf_metrics}
In the main text of the paper, we measured the performance of reward inference using the normalized reward regret. However, another metric we can measure reward inference performance with is the mean-squared error (MSE) when considering the reward functions as four-dimensional vectors. In Figure \ref{fig:app_boltzmannfeedback_mse}, we provide the results of beta-fitting using simulated Boltzmann-rational feedback using this MSE metric. We observe that the mean-squared error metric follows the same general trends as the regret seen in the main text. One important exception, however, is that in the case where we underestimate $\beta$, the MSE results show default having a higher MSE, whereas the regret continues to perform comparably to the fitted and oracle methods. We attribute this to the fact that shifting a reward function by a constant can result in the same eventual behavior, but a reward function whose entries are very different from the ground-truth.

\begin{table*}
\label{tab:feedbacktypeserror}
\footnotesize
\centering
    \caption{Mean Error in Fitting Beta for Different Feedback Types.}\label{tab:menerror}
\begin{tabular}{llll}
 \toprule 
\bfseries Feedback Type  & \bfseries Beta Fit Error   \\
 \midrule 
              
Demonstration & 0.054                                                                       \\
              
\midrule 

Comparison & 0.06 \\

\midrule 
E-Stop  & 0.048                 
\end{tabular}

\end{table*}
We also tested the accuracy of the beta fitting procedure across the different feedback types and show the results in Table 2. These results demonstrate that our beta-fitting procedure is highly accurate, and this accuracy explains the benefits seen by fitting beta in our paper.
\begin{figure*}
     \centering
     \begin{subfigure}[b]{0.32\textwidth}
         \centering
         \includegraphics[width=\textwidth]{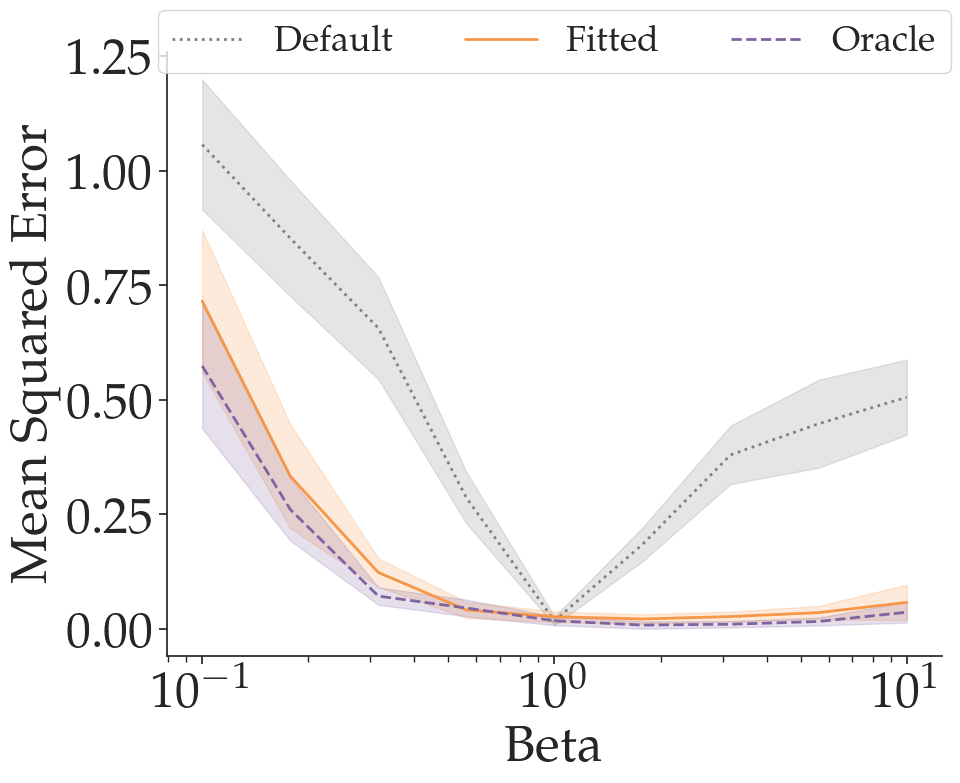}
         \caption{Demonstrations}
         \label{subfig:demorational}
     \end{subfigure}
     \begin{subfigure}[b]{0.32\textwidth}
         \centering
         \includegraphics[width=\textwidth]{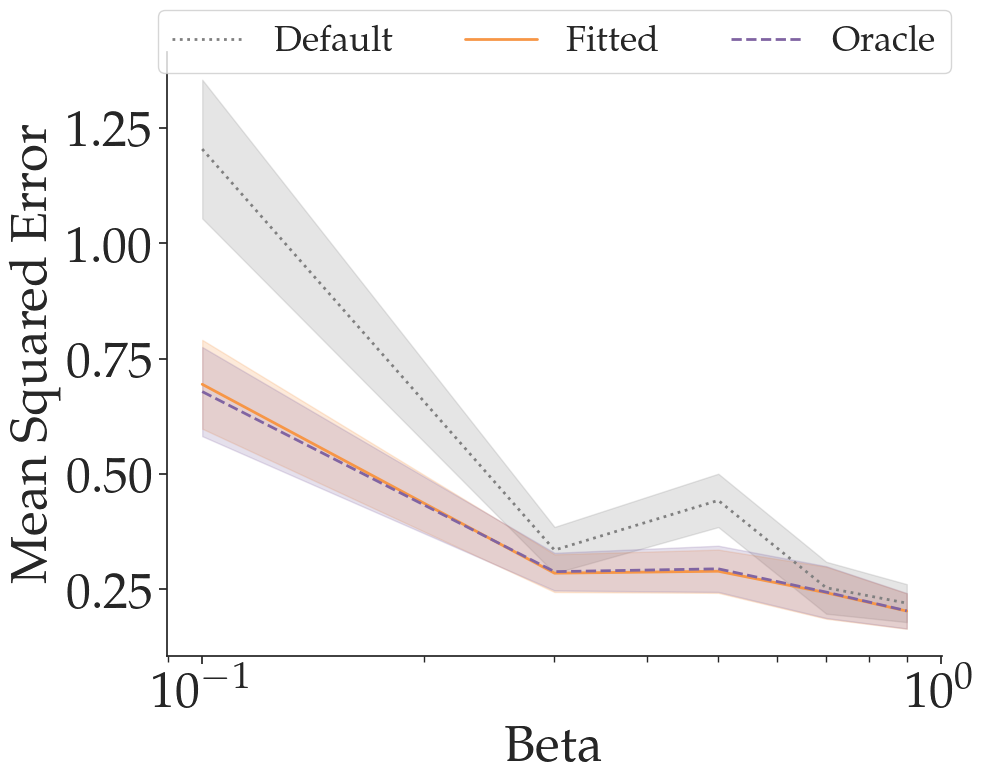}
         \caption{Comparisons}
         \label{subfig:comprational}
     \end{subfigure}
     \begin{subfigure}[b]{0.32\textwidth}
         \centering
         \includegraphics[width=\textwidth]{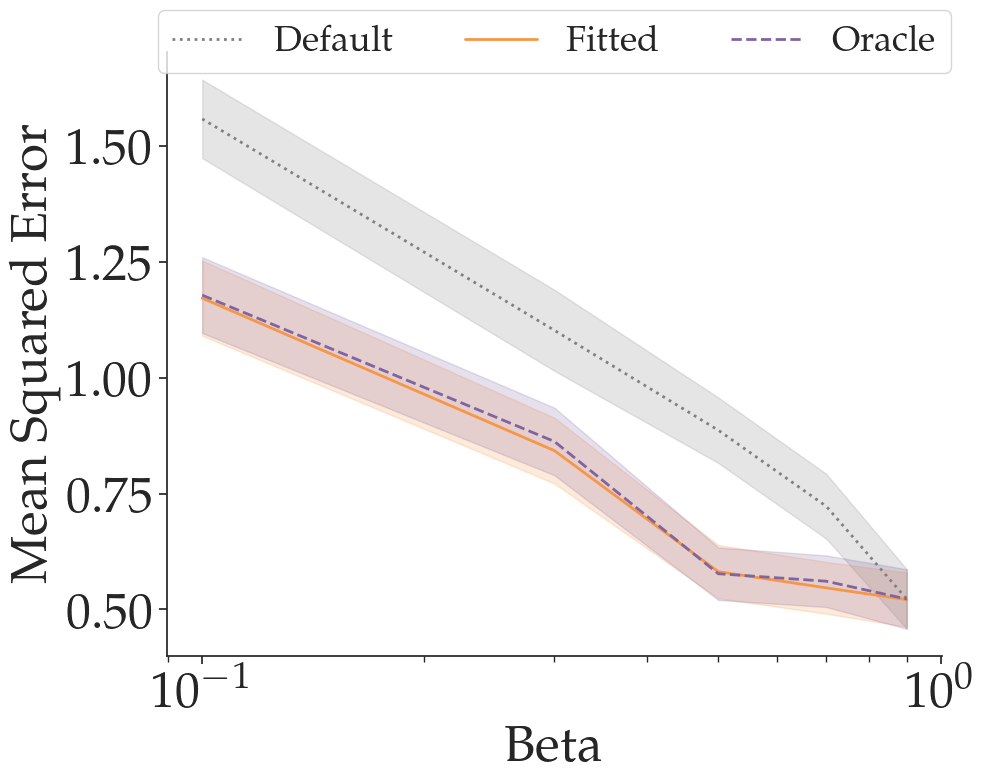}
         \caption{E-Stops}
         \label{subfig:estoprational}
     \end{subfigure}
     \caption{\textbf{Reward MSE on Simulated Boltzmann-Rational Feedback.} We compare the rationality-fitting (Fitted) method which learns $\hat{\beta}$ from data to a Default method which assumes $\hat{\beta}=1$ and an Oracle method which sets $\hat{\beta} = \beta^*$ using the reward mean-squared error metric. Our results generally mirror those using the regret, showing that Fitted (our approach) significantly outperforms Default, achieving similar performance to Oracle. Note that due to the overlap between fitted (orange) and oracle (purple) both curves appear red.}
    \label{fig:app_boltzmannfeedback_mse}
\end{figure*}

\section{Reward Inference for Simulated Biased Human Feedback}\label{app:sim_bias_other_metrics}
In the main text of the paper, we have explored the performance of reward inference on simulated-biased demonstrations. Here, we examine the case of a simulated bias for comparisons and e-stops. In the discussion that follows, it is important to observe that different feedback types are varyingly impacted by different biases. One illustrative case of this is biases which affect the perception of the transition dynamics. Such biases would clearly be expected to impact demonstrations because demonstrations rely upon planning a trajectory through the environment. On the other hand, when confronted with the task of comparing two trajectories having already taken place, transition dynamics play no role in computing the Boltzmann rational choice. Notably, this observation plays an important role in motivating the importance of modelling irrationality. If it is possible for a robot to identify that a human's bias has no impact on a particular feedback type, it has the opportunity to select for that feedback type. In this case, we specifically analyze the case of myopia bias and we leave further analysis of the systematic biases affecting other feedback types to future work.
\paragraph{Multiple Biases} We tested the effect of fitting $\beta$ when there are multiple biases simultaneously for Demonstrations. In particular, we examined the case of a simultaneous myopia and extremal bias with parameters (0.5, 0.5). We found that found that \textbf{Default} achieved regret \textbf{0.37±0.12}, \textbf{Fitted} achieved \textbf{0.11±0.08}, and \textbf{Oracle} achieved \textbf{0.05±0.10}.

\subsection{Comparisons}
\begin{figure*}
     \centering
     \begin{subfigure}[b]{0.4\textwidth}
         \centering
         \includegraphics[width=\textwidth]{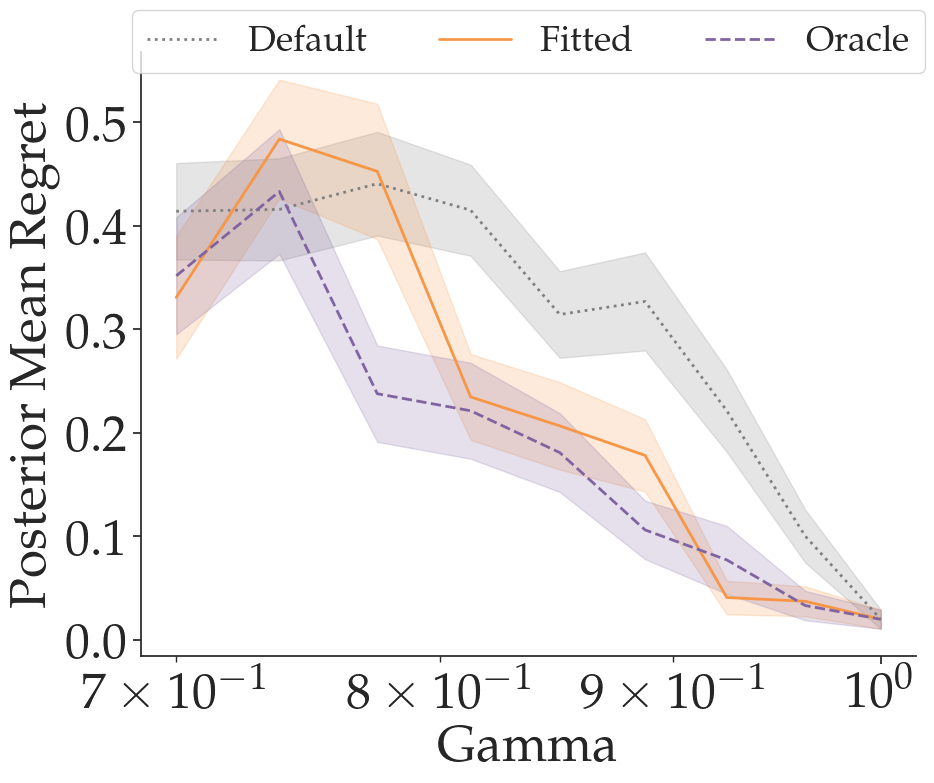}
         \caption{Comparison}
         \label{subfig:app_myopia_comp}
     \end{subfigure}
     \hfill
     \begin{subfigure}[b]{0.4\textwidth}
         \centering
         \includegraphics[width=\textwidth]{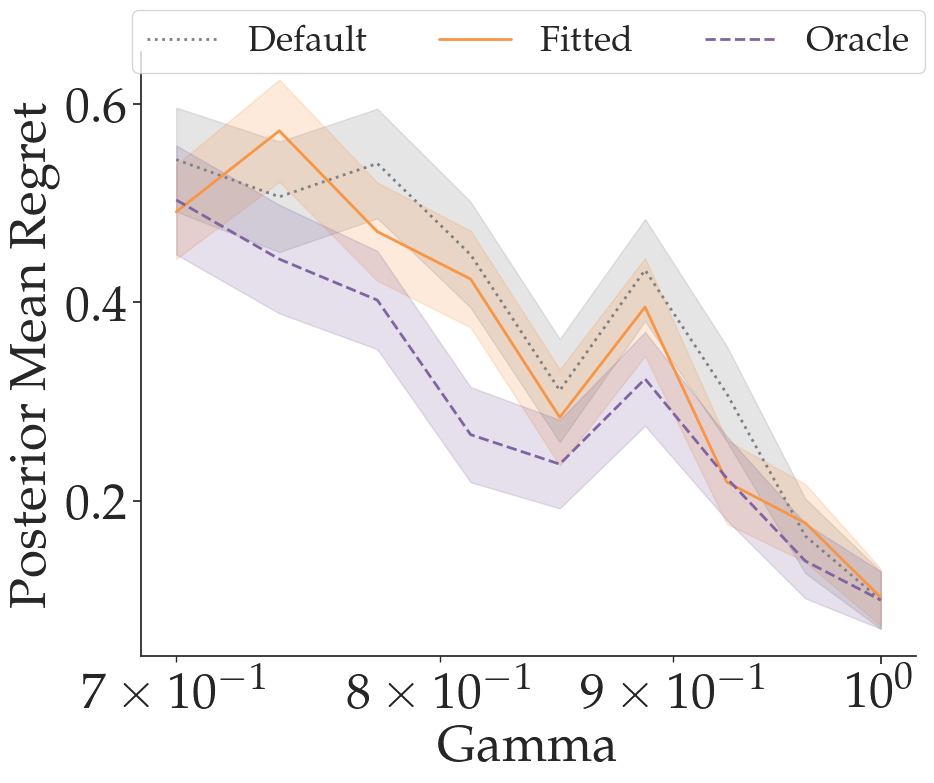}
         \caption{E-Stop}
         \label{subfig:app_extremal_comp}
     \end{subfigure}
        \caption{ \textbf{Reward Inference on Simulated Biased Comparisons and E-Stops.} We compare the $\beta$-fitting method (\textit{Fitted}) introduced in this work with a \textit{Default} method which assumes $\beta=1$ and an \textit{Oracle} method that performs reward inference with access to a perfect model of the biased human. We see in (a) that $\beta$-fitting improves reward inference in the case of comparisons. However, in (b), we find that $\beta$-fitting does not improve performance very much in the case of E-Stops.
        }
         \label{fig:biasedfeedbackcomp}
\end{figure*}
 Recall that the observation model for Boltzmann-rational comparisons as shown in Equation \ref{comp_eq}, involves the quantity $r(\xi) = \sum_{(s,a) \in \xi} r(s,a)$, the cumulative reward achieved during the trajectory. In order to simulate Myopia bias for comparisons, we introduce a discount factor $\gamma$ into this cumulative reward calculation and simulate the comparison choices to be Boltzmann-rational with respect to the discounted cumulative rewards. In Figure \ref{fig:biasedfeedbackcomp} (a), we display the impact of $\beta$-fitting for these biased comparisons. We find that there exists a narrower range of $\gamma$ under which the reward can be recovered well even by the oracle method. Nevertheless, on this range we find the $\beta$-fitting results in substantial improvements to the reward inference.
\subsection{E-Stops}
In order to simulate myopic e-stops, we multiplied a discount factor to the accumulated reward used in the observation model for Boltzmann-rational E-Stops. In Figure \ref{fig:biasedfeedbackcomp} (b), we examine the impact of $\beta$-fitting on myopic e-stops, finding that $\beta$-fitting provides a slight, but not significant improvement over the default method.

\section{Information Gain Crossover Example}
\label{sec:crossover-example}
\begin{figure}
\centering
\includegraphics[width=0.4\textwidth]{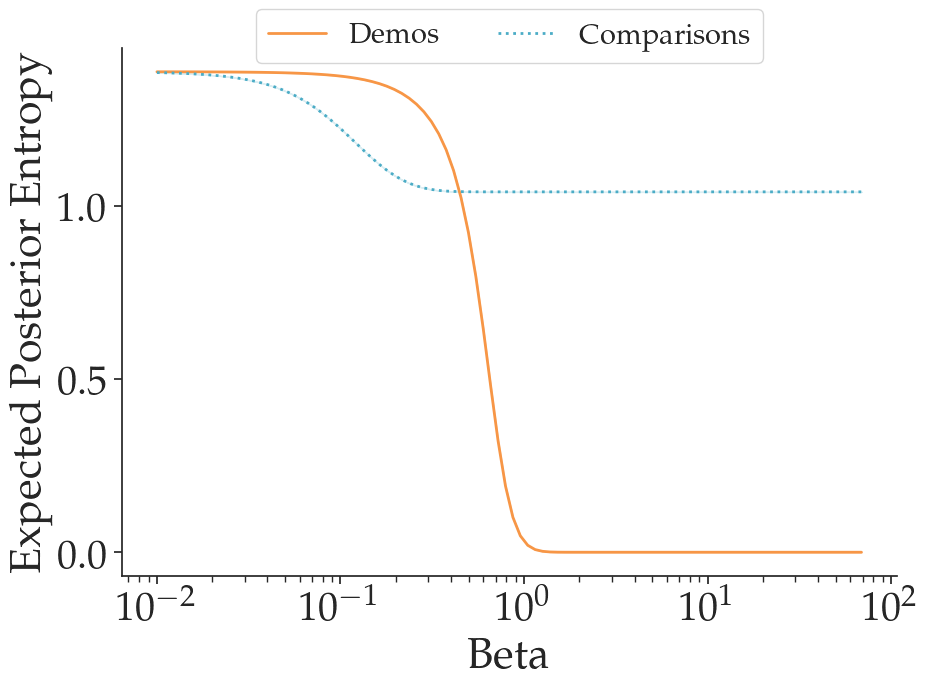}
\caption{\textbf{Example of Posterior Mean Entropy of Comparisons and Demonstrations}. We plot the expected posterior entropy after using comparisons or demonstrations for the example presented in Section E. We observe that there is a sharp transition around $\beta = 1$ from comparisons providing the largest reduction in entropy to demonstrations. We examine the intuition of this phenomena in terms of the particular choice sets involved in Section E.}
\end{figure}
Here is a simple environment determined by a few parameters where different settings of the parameters change whether demonstrations or comparisons are more informative, lending some insight into aspects of the environment which favor different feedback types. The set of possible reward parameters is $\Theta = \{\theta_1^+, \theta_1^-, \dots, \theta_N^+, \theta_N^-\}$, the set of possible choices is $\mathcal{C} = \bigcup_{i=1}^N \{c_i^+, c_i^-, c_i^{p_1}, \dots, c_i^{p_K}, c_i^{n_1}, \dots, c_i^{n_K}\}$, and the rewards of each choice depending on the value of the reward parameter are
\begin{align*}
    r(c_i^s, \theta_j^{s'}) &= \begin{cases}
    R_1 & i=j, s=s' \\
    -R_1 & i=j, (s = +, s' = -)\text{ or }(s = -, s' = +) \\
    R_2 & i=j, (s = p_\ell, s' = +) \text{ or }(s = n_\ell, s' = -)\\\MoveEqLeft\text{ for some $\ell$} \\
    -R_2 & i=j, (s = p_\ell,s' = -) \text{ or }(s = n_\ell, s' = +)\\ \MoveEqLeft\text{ for some $\ell$} \\
    R_3 & i \neq j.
    \end{cases}
\end{align*}
We will set $R_1 > R_2 > R_3 > 0$.
Intuitively, the reward has $N$ different "directions," and along a direction $i$ there are two options $\theta^+_i$ and $\theta^-_i$. Supposing that one of $\theta^+_i$ or $\theta^-_i$ are the true reward parameter, then there are $2(K+1)$ choices which are "sensitive" to whether the reward is $\theta^+_i$ versus $\theta^-_i$, while the remaining $(N-1)2(K+1)$ choices all have equal value either way. Among the $2(K+1)$ choices which are sensitive to $\theta^+_i$ versus $\theta^-_i$, there are two extreme options $c_i^+, c_i^-$ which give large reward $R_1$ if they "match" the reward parameter, and very large cost $-R_1$ if they do not match. There are also a larger number $2K$ of "conservative" options which give smaller reward $R_2$ if they match the reward parameter but also less severe cost $-R_2 $ if they do not match the reward parameter.

Now we analyze the expected information gains from human demonstrations and comparisons in this environment. Demonstrations will allow the human to make any choice, while comparisons will present the human with only two choices. We will only analyze comparisons of the form $c_i^+$ vs $c_i^-$ (for some $i$), but this is hardly a limitation because the resulting information gain provides a lower bound on the information gain of the best comparison. We assume a uniform prior over reward parameters in $\Theta$, and for computational convenience we note that maximizing expected information gain is equivalent to minimizing expected posterior entropy, so we can compute expected posterior entropies for demonstrations and comparisons.

First we compute the expected posterior entropy of a demonstration. After observing a choice $c_i^s$, the posterior mass on $\theta_j^{s'}$ is
\begin{align*}
    P(\theta_j^{s'} \mid c_i^s) & \propto P(c_i^s \mid \theta_j^{s'}) \\
    &\propto \begin{cases}
    \exp \beta R_1 & C_{1} \\
    \exp -\beta R_1 & C_{2} \\
    \exp \beta R_2 & C_{3} \\
    \exp -\beta R_2 & C_{4} \\
    \exp \beta R_3 & i \neq j.
    \end{cases}
\end{align*}
    $C_{1}: i=j, s=s'\\ 
    C_{2}: i=j, (s=+, s'=-)\text{ or }(s = -, s' = +)\\
    C_{3}: i=j, (s=p_\ell, s'=+) \text{ or }(s=n_\ell, s'=-)\text{ for some } \ell\\
    C_{4}:i=j, (s=p_\ell, s'=-) \text{ or }(s=n_\ell, s'=+)\text{ for some } \ell\\
    $
    
For $2(N-1)$ values of $\theta_j^{s'}$, we will be in the bottom case, and there will only be two values of $\theta_j^{s'}$ for which we are in any of the top four cases. If $c_i^s$ was an extreme choice $c_i^+$ or $c_i^- $ then we will reach the top two cases each once, otherwise we will reach cases three and four each once. Therefore if one of the extreme options $c_i^+$ or $c_i^- $ was taken, using $H(p_1, \dots, p_k)$ to denote the entropy of a distribution proportional to $(p_1, \dots, p_k)$, we will obtain posterior entropy $$H\bigl(\exp(\beta R_1), \exp(-\beta R_1), \underset{2(N-1)}{\underbrace{\exp(\beta R_3), \dots, \exp(\beta R_3)}}\bigr).$$
If $c_i^s$ was a conservative choice in $c_i^{p_1}, \dots, c_i^{p_K}, c_i^{n_1}, \dots, c_i^{n_K}$ then the posterior entropy will be
$$H\bigl(\exp(\beta R_2), \exp(-\beta R_2), \underset{2(N-1)}{\underbrace{\exp(\beta R_3), \dots, \exp(\beta R_3)}}\bigr).$$
The probability of the human choosing an extreme choice is proportional to $\exp(\beta R_1) + \exp(-\beta R_1) + 2(N-1)\exp(\beta R_3)$ and the probability of choosing a conservative choice is proportional to $K \exp(\beta R_2) + K  \exp(-\beta R_2) + 2K(N-1) \exp(\beta R_3)$. (Adding these two quantities provides the normalizing constant.) These calculations combine to yield the expected entropy of the human posterior after providing a demonstration.

Now we compute the expected posterior entropy after a comparison. Again, instead of computing the expected posterior entropies of all comparisons and using the best, we only consider comparisons of the form $c_i^+$ vs $c_i^-$, where $i$ is fixed. WLOG we can assume that $c_i^+$ is chosen (since the posterior in the case that $c_i^-$ is chosen is the same up to permutation). The posterior mass on $\theta_j^s$ is
\begin{align*}
    P(\theta_j^s \mid c_i^+) &\propto P(c_i^+ \mid \theta_j^s) \\
    &\propto \begin{cases}
    \exp(\beta R_1) / Z & i=j, s=+  \\
    \exp(-\beta R_1) / Z & i=j, s=- \\
    1/2 & i \neq j
    \end{cases}
\end{align*}
where we define $Z = \exp(\beta R_1) + \exp(- \beta R_1)$. Therefore the posterior entropy is
$$H\bigl(\exp(\beta R_1)/Z, \exp(-\beta R_1)/Z, \underset{2(N-1)}{\underbrace{1/2, \dots, 1/2}}\bigr).$$

We now provide more intuition into these calculations. When an "extreme" choice is taken by the human for a demonstration, we obtain the posterior with the lowest entropy. However, as $K$ increases, it becomes more likely that the human will instead take one of the (less rewarding but more numerous) "conservative" demonstration choices, which result in a worse posterior. This provides an opportunity for comparisons to have a lower expected posterior entropy since the comparison is between some two extreme options $c_i^+$ and $c_i^-$, thus eliminating the conservative and less informative choices. However, for comparisons to be more informative, we need $N$ to be sufficiently low (depending on the values of the other parameters, including $\beta$). This can be verified by examining the entropies calculated above, but an intuitive explanation is that a comparison can only present the two options $c_i^+$ and $c_i^-$  for one particular value of $i$, but the correct value of $i$ (matching the $j$ such that $\theta_j^s$ is the human's reward parameter) to ask for is not known and thus there is only a $1/N$ chance that the robot guesses correctly, and the rest of the time the robot must be presenting the human two options towards which the human is completely indifferent. The robot knows this and thus must temper the formed posterior. When $N=1$ and $K>0$, comparisons are actually strictly superior to demonstrations (for all $\beta$), since the comparison is removing the conservative options and thus forcing the human to make a more informative choice, without being penalized for possibly missing the important direction $i$. When $N>1$, keeping all other parameters fixed, whether demonstrations or comparisons are superior may depend on $\beta$, since it depends on which effect is more harmful: high "dimensionality" causing a large probability of a comparison asking about an irrelevant reward direction, or large $K$ and low $\beta$ causing a large probability of a provided demonstration making a choice which is suboptimal and thus results in a higher-entropy posterior.

\section{Exact Beta Inference}
\label{app:exact_beta_inf}

\subsection{Demonstrations}
When there is access given to the underlying demonstration policy, it is possible to conduct exact inference of the human's rationality.
Because we have access to the entire policy (for a given $\theta$) $\pi$ describing the simulated biased demonstrators, we can circumvent forming $\beta$ estimates by sampling from $\pi$ by instead performing an exact M-projection of $\pi$ onto the family of softoptimal policies $\{\pi_{\beta, \theta} :  \beta \in (0, \infty)\}$ for the particular reward $\theta$:
\begin{align*}
    \hat{\beta} &= \argmin_{\beta \in (0, \infty)} D_{KL}(\pi \mid \mid  \pi_{\beta, \theta}) \\
    &= \argmin_{\beta \in (0, \infty)} \sum_{\xi} \pi(\xi) \log \left( \frac{\pi(\xi)}{\pi_{\beta, \theta}(\xi)}\right) \\
    &= \argmin_{\beta \in (0, \infty)} \sum_{\xi} \pi(\xi) \left( \sum_{t = 0}^{T-1}\log \left( \frac{\pi(a_t \mid s_t)}{\pi_{\beta, \theta}(a_t \mid s_t)}\right) \right) \\
\end{align*}
and the term inside the $\arg \min$ can be evaluated efficiently because it is equivalent to policy evaluation of the policy $\pi$ on the non-stationary reward $r_t(s, a) = \log(\frac{\pi(a_t \mid s_t)}{\pi_{\beta, \theta}(a_t \mid s_t)})$. This is an optimization problem over a single scalar variable so it is easy to solve.

\subsection{Comparisons and E-Stops}
For comparisons and E-stops we take the same approach of performing exact M-projection, but in these cases this is simpler to compute due to the much smaller number of possible choices. We describe this in slightly more generality and then specialize to comparisons and E-stops.

Given $m$ choices $c_1, \dots, c_m$ with rewards $r_{1, \theta}, \dots, r_{m, \theta}$, we can define the $\beta$-rational policy (for a particular $\theta$) $\pi_{\beta, \theta}$ to be the policy which makes choice $c_i$ with probability 
$\exp(\beta r_{i, \theta})/{\sum_{j=1}^m \exp(\beta r_{j, \theta})}$. A general human can be described by the policy $\pi$ which specifies the probabilities of taking each choice. To compute the M-projection we evaluate $
    D_{KL}(\pi \mid \mid  \pi_{\beta, \theta})
$
by simply expanding its definition. This applies to comparisons by taking $m=2$ and to E-stops by having the choices as $c_1, \dots, c_m = \xi_{0:0}, \dots, \xi_{0,T}$ for some trajectory $\xi$. As described this only accommodates a single design (ex. one pair of trajectories to compare/one trajectory to stop along) but the extension to multiple designs is immediate since the choice taken in each design is independent and thus the KL divergence between the true model and a $\beta$-rational model for multiple choices in multiple designs is simply the sum of the individual KL-divergences for each design.

\section{Exact Beta Inference Empirical Results}\label{app:exact_beta_empirical}
\begin{figure*}
     \centering
     \begin{subfigure}[b]{0.4\textwidth}
         \centering
         \includegraphics[width=\textwidth]{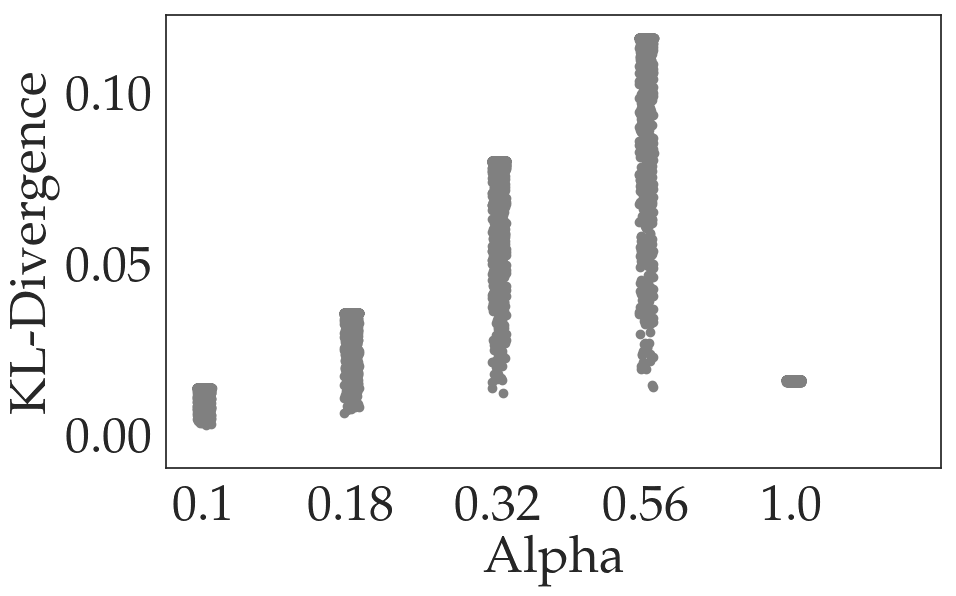}
         \caption{KL Divergence}
         \label{subfig:app_kl_myopia}
     \end{subfigure}
     \hfill
     \begin{subfigure}[b]{0.4\textwidth}
         \centering
         \includegraphics[width=\textwidth]{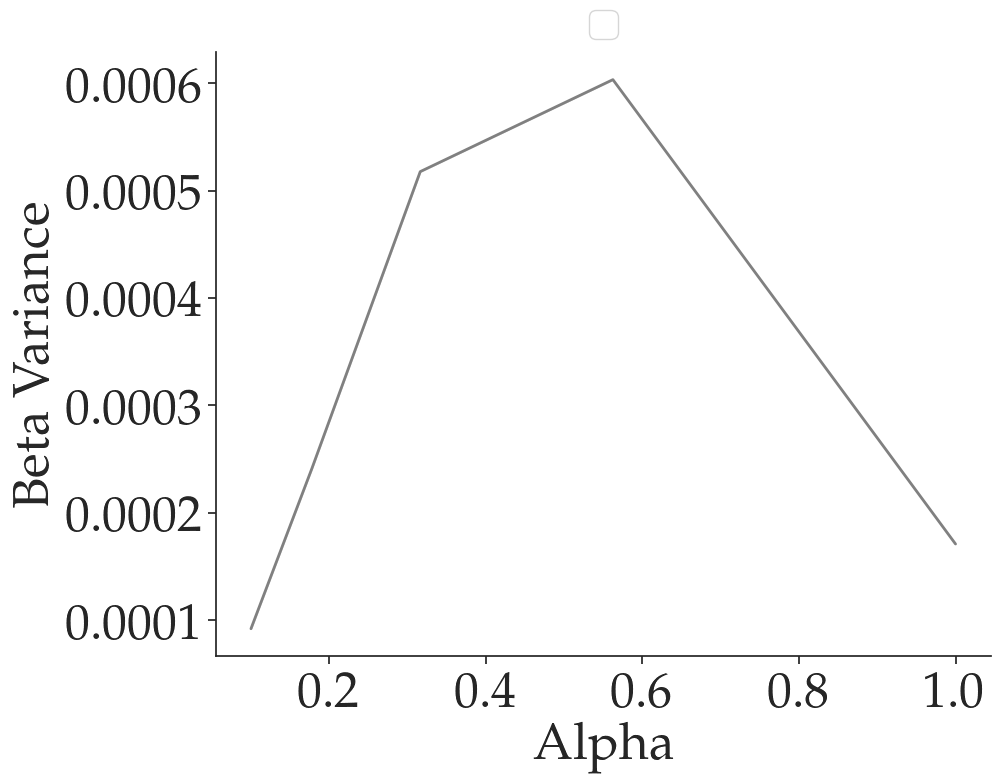}
         \caption{Beta Variance}
         \label{subfig:app_var_extremal}
     \end{subfigure}
        \caption{ \textbf{Exact Inference Analysis for Extremal Bias}. In (a), we show the KL-Divergence of extremal-biased policies against all Boltzmann-rational policies across the candidate reward functions. Our results reinforce the ideas of reward identifiability introduced in the main paper, as the settings where $\beta$ fitting performs comparably to oracle are when the KL-divergences are very spread out. In (b), we show the variance of $\hat{\beta}$ across all reward functions as a measure of $\beta$ generalization. Our variance results do not fully explain the success or failure of $\beta$-fitting because the variance is uniformly low across all bias levels.}
         \label{fig:klextremal}
\end{figure*}
In the main text of the paper, we presented some analysis of the generalization of $\beta$ and the reward identifiability for the myopia and optimism biases. Here we provide a similar analysis for the extremal bias. In Figure \ref{fig:klextremal} (a), we show KL Divergence between the extremal-biased policy and Boltzmann-rational policies for every other candidate reward function. We find that at extremal bias levels where the KL Divergences are spread out, our fitted method performs comparably with the oracle method. On the other hand, at $\alpha$ values where all reward functions have roughly the same KL-Divergence, we find that the fitted method performs worse than oracle. This gives further evidence for the concept of reward identifiability presented in the paper. Additionally, in Figure \ref{fig:klextremal} (b), we presented the variance of $\hat{\beta}$ over all the reward functions at at different levels of extremal bias. We observe that the magnitude of $\beta$ variance is uniformly low. However, we observe the slightly counter-intuitive result that at the parameter values where the $\beta$ variance is lowest, our method does not perform as well as Oracle, showing that the generalization ability of $\beta$ across different reward functions is insufficient to explain the success or failure of $\beta$ fitting. 
\section{Impact of Assumed \texorpdfstring{$\beta$}{Beta} on Learning From a Biased Human}
\textcolor{blue}{\begin{figure*}
     \centering
     \begin{subfigure}[b]{0.4\textwidth}
         \centering
         \includegraphics[width=\textwidth]{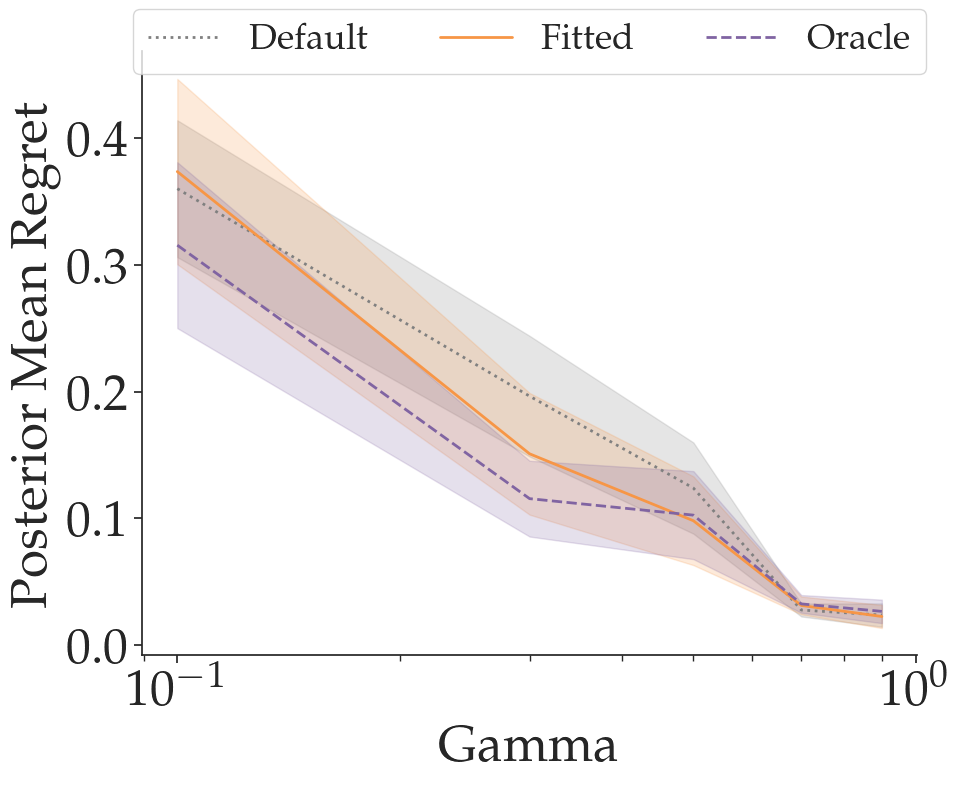}
         \caption{$\beta=0.5$}
         \label{subfig:app_myopia}
     \end{subfigure}
     \hfill
     \begin{subfigure}[b]{0.4\textwidth}
         \centering
         \includegraphics[width=\textwidth]{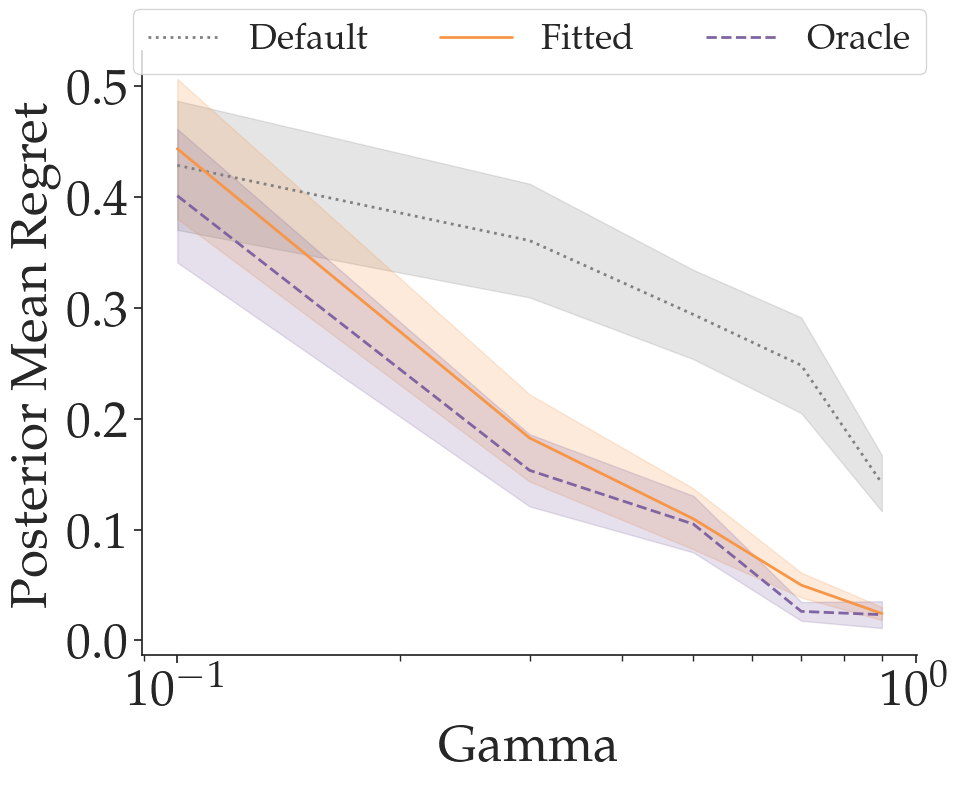}
         \caption{$\beta=5$}
         \label{subfig:app_extremal}
     \end{subfigure}
        \caption{ \textbf{Varying Default Assumed $\beta$}. In this figure, we examine reward inference for myopic demonstrations in the case where the default $\beta$ is considered be (a) 0.5 or (b) 5. We find that increasing the assumed $\beta$ causes the default method to perform worse in comparison to fitted and oracle, while decreasing it narrows the gap between the methods.
        }
         \label{fig:betadefaultvary}
\end{figure*}}

In Figure \ref{fig:betadefaultvary}, we examine the effect of assuming different default $\beta$ values when learning from myopic demonstrations. In one case, we decreased the assumed $\beta$ to 0.5 and found that this caused the default method to perform more comparably to fitted and oracle. In another case, we increased the assumed beta to 5. We found that this caused the default method to perform even more poorly relative to fitted and oracle. These results give further evidence for the theory presented in Section 3: Reducing the assumed $\beta$ causes default to \textit{overestimate} less, leading to improved reward inference. On the other hand, when we increase the assumed $\beta$, default overestimates the true $\beta$, leading to worse reward inference.
\section{Additional Active Learning Ablations}

\begin{figure*}
\centering
\label{fig:secondablation}
     \begin{subfigure}[b]{0.4\textwidth}
         \centering
         \includegraphics[width=\textwidth]{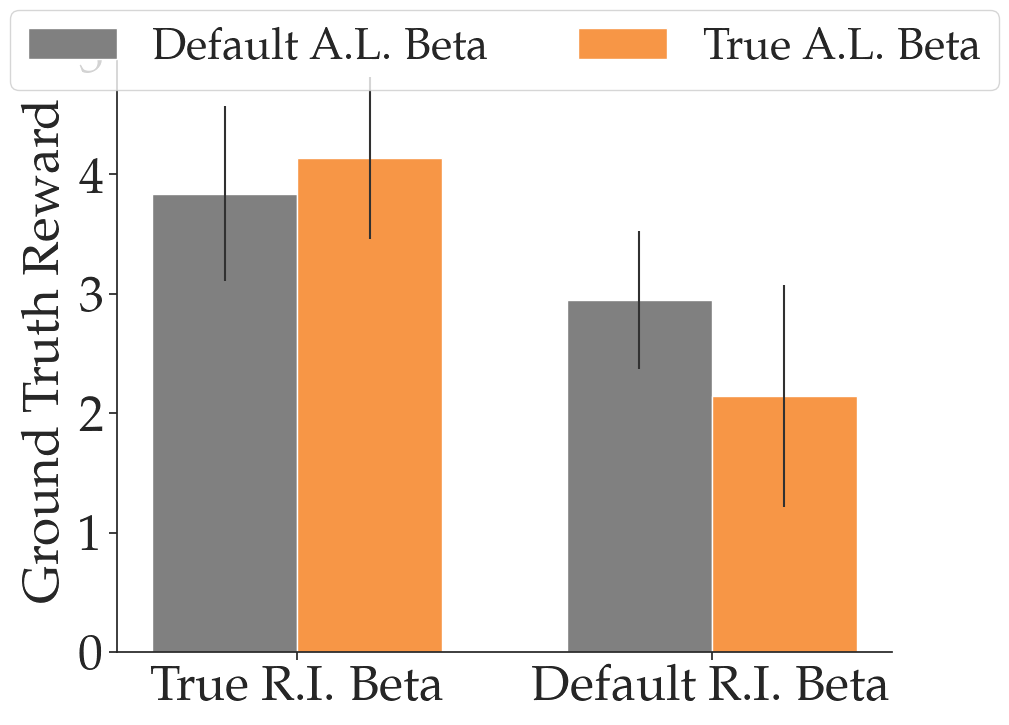}
         \caption{$\beta_{default}=10, \beta_{demo}=1$}
         \label{subfig:app_additional_al_1}
     \end{subfigure}
    \begin{subfigure}[b]{0.4\textwidth}
         \centering
         \includegraphics[width=\textwidth]{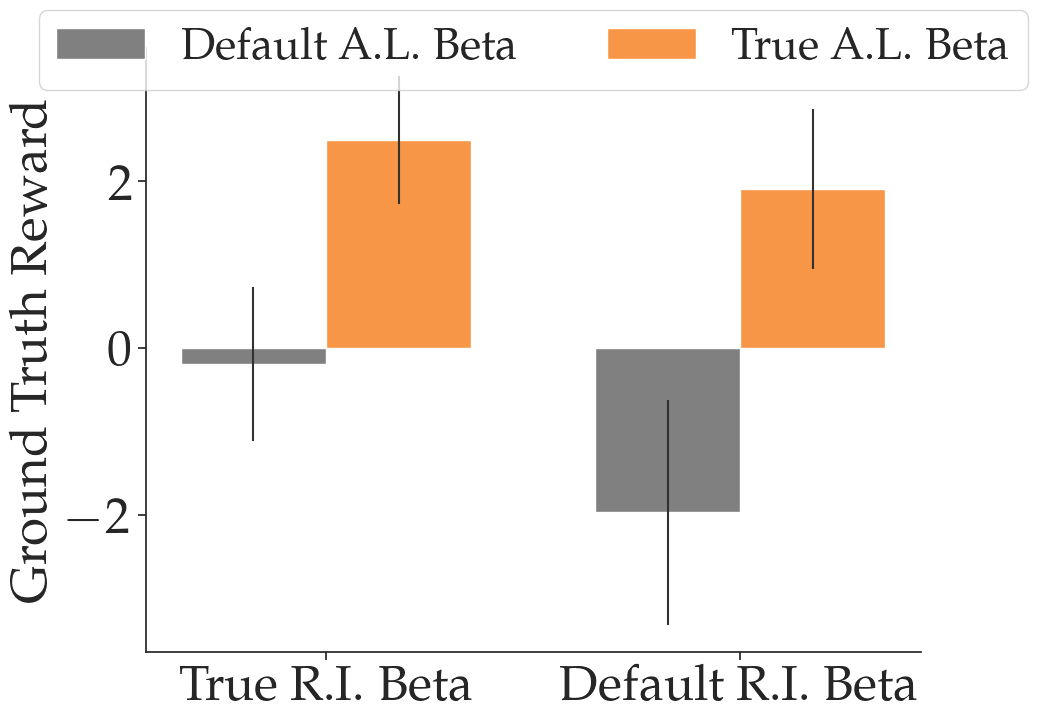}
         \caption{$\beta_{default}=10, \beta_{comp}=1$}
         \label{subfig:app_additional_al_2}
     \end{subfigure}

\caption{\textbf{Additional Active Learning Ablation}. In (a) We additionally consider an active learning ablation in the case that demonstrations are the most rational feedback type and the assumed $\beta$ over-estimates the rationality of every feedback type. In this case, we find that have the correct reward inference beta plays a role of increased importance. In (b), we consider the case where the default $\beta$ overestimates the true rationality and comparisons are the most rational feedback type. Here, we observe that while having the correct active learning $\beta$ is very important (so that you know to choose comparisons), having the correct reward inference beta plays a larger role than the result shown in Figure 4(b), since the default rationality in this case over-estimates the feedback rationality, as opposed to under-estimating it (as is the case in 4(b).}
\end{figure*}
In addition to the active learning setting provided in the main text, we also analyzed the case where (a) the default $\beta$ over-estimates the all true rationalities and (b) the relative rationalities of different feedback types are changed. In order to conduct this study, we set the assumed $\beta$=10 and the true $\beta_{demo}=1, \beta_{comp}=0.5, \beta_{estop}=0.1$. The results are shown in Figure 10. In this case, we find that having the correct $\beta$ for reward inference plays a larger role in determining the performance of the active reward inference. We attribute this to the fact that when the default rationality is used for reward inference, demonstrations are primarily still selected. However, the $\beta=10$ in reward inference severely overestimates the true rationality which causes reward inference to be poor. In Figure 11 (b), we consider the case where the default over-estimates all feedback rationalities and the most informative feedback type is comparisons. In this case, having the correct active learning beta is very important, as this is necessary to correctly choose comparisons queries. However, we see that not having the correct reward inference beta also leads larger to reductions in performance than seen in Figure 4(b). This can be explained by the fact that using the default reward inference rationality in this case causes an overestimation of the feedback rationality, which as shown by our theory can be harmful.

\section{User Study Details}

The user study was performed according to IRB approval and consent was obtained by having the participant review and ask questions about a consent form prior to the study taking place. All data collected from each participant was anonymized. Participants were compensated for their time at a rate of 20 dollars an hour. The user study was designed to take approximately 45 minutes and participants received no less than 15 dollars regardless of the amount of time it took them to complete the study.  In total, \$105 was spent on participant compensation.
\begin{figure*}
     \centering
     \begin{subfigure}[b]{0.3\textwidth}
         \centering
         \includegraphics[width=\textwidth]{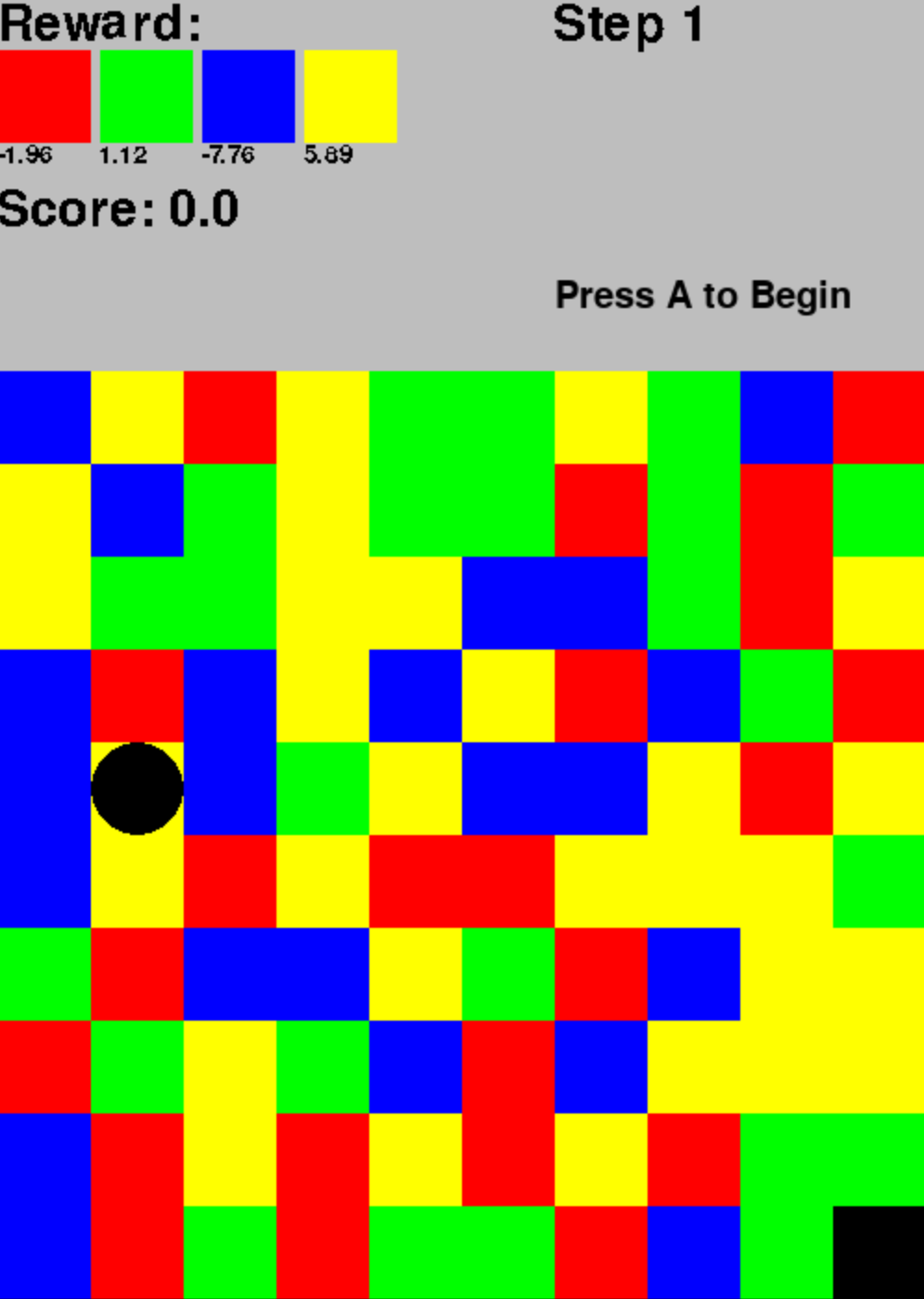}
         \caption{Demonstrations}
         \label{subfig:app_user_demo}
     \end{subfigure}
     \hfill
     \begin{subfigure}[b]{0.6\textwidth}
         \centering
         \includegraphics[width=\textwidth]{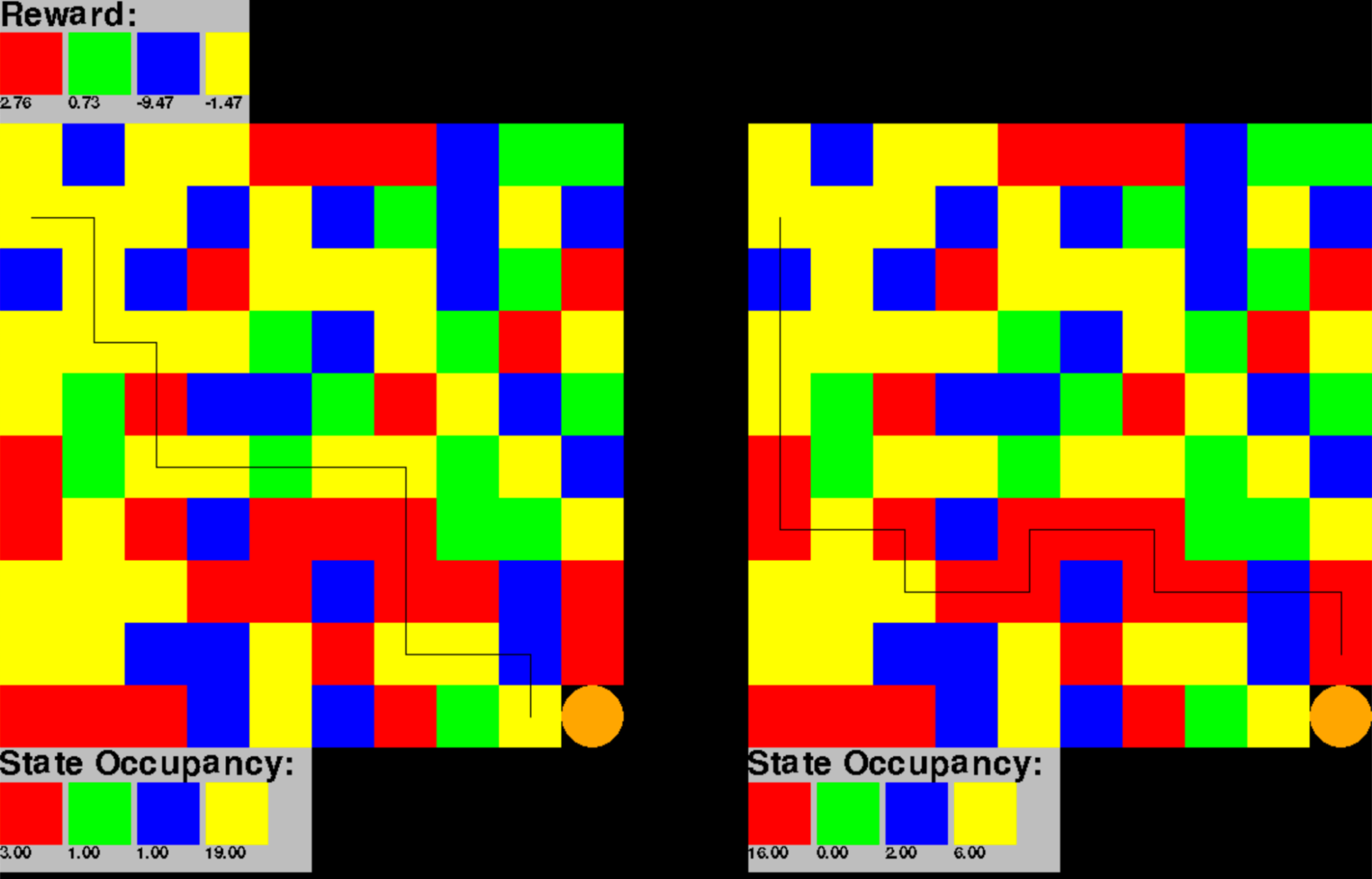}
         \caption{Comparisons}
         \label{subfig:app_user_comp}
     \end{subfigure}
        \caption{ \textbf{User Study Interface}. We show the feedback collection interface for demonstrations (a) and comparisons (b). For demonstrations, the participants could view in real-time the position of the agent, their current score, and the time step. When providing comparisons, participants had access to a trace of the trajectory in the environment, as well as the counts of how many time-steps the agent spent in each colored tile.  
        }
         \label{fig:userstudyinterface}
\end{figure*}

In our user study, we presented 5 calibration rewards to each user (using the interface shown below) and asked them to provide 10 instances of feedback per reward (5 comparisons and 5 demonstrations). In Figure \ref{fig:userstudyinterface}, we provide examples of the feedback collection interfaces for each feedback type. For both feedback types, the reward function was displayed in the top-left hand corner, as a mapping between tile colors and the rewards earned for stepping on the relevant tiles. In addition, there was a completion reward present in the environment of 250 reward and users were advised of this during the instructions phase of the study. The completion reward was chosen to ensure that the optimal path ended in the goal state, while also ensuring that the shortest path from start to goal was not always optimal. 
\paragraph{Demonstration Interface}  Figure \ref{fig:userstudyinterface} (a) shows the demonstration collection interface. In addition to the reward function, participants were provided with real-time access to the current reward score achieved by the demonstration, as well as the current time step. The agent's position was depicted by the black circle and the participant was able to control its movement by using the arrow keys. Participants experienced time pressure when giving the demonstration, as the interface ran at 6 time-steps per second and failure to provide control input at a given time-step would result in the agent continuing to slide in its last direction of movement. The participant was given unlimited time to review the reward function and starting state prior to beginning the demonstration and would press the A key to start the demonstration. 
\paragraph{Comparisons Interface} The participant was presented with two trajectories and asked to compare which was superior under the given reward function. To assist them in making their decision, they were provided with environment maps depicting a trace of the agent's movement. Moreover, participants were provided with time-step counts of how long the agent spent in each color tile. Participants were allowed unlimited time and could indicate their selection by pressing 1 to select the left trajectory and 2 for the right. 

Prior to beginning, participants were provided with the following instructions:

\textit{In this user study, you will be providing two different feedback types to help an artificial intelligence agent infer a reward function. You will be providing demonstrations and comparisons.}

\textit{You will be providing feedback in a 10x10 gridworld environment, where the tiles have four possible colors. You will be controlling the orange circle when you give your demonstrations. 
Each reward function will be a mapping between tile colors and a reward you get for every time step you spend on the tile color. 
The environment will have a horizon length of 25. This means that the maximum number of steps in a trajectory is 25. }

\textit{The black square in the bottom right corner of the environment is a goal state. Provided you reach the goal state by the end of the horizon, you will receive +250 reward. If you arrive at the goal before the horizon ends, you will receive +250 reward at the point you reach the goal and +0 reward for all future time steps.
You will provide demonstrations using an interface that looks like the above. You will control the movement of the agent using the arrow keys (up key moves up, down key moves down, left key moves left, right key moves right). You will experience time pressure when giving demonstrations. The interface will run at 6 time-steps per second and if you don’t provide input at a given step, the agent will slide in the direction it was last moving. 
Score: The score field on the left-hand side will tell you the current score you have accumulated 
Step: The step field on the right-hand side will tell you how many time-steps have elapsed since the beginning of the episode.}

\textit{You are encouraged to review the reward function carefully before starting the demonstration. When you are ready to begin, you should press the “a” key and then begin controlling the robot.
You will also be providing preference comparisons between two trajectories that the robot itself has generated.A trace of the trajectory is shown on the screen. We have also provided information about how long the robot spends on each color to help you decide which trajectory better represents the reward.
}

Following the presentation of the instructions, the participant was given unlimited practice attempts on each feedback type. They were also given a chance to ask any questions they ha about the interface or the strategy for giving the feedback. After this, the participants began the formal study. 

Participants first provided 25 demonstrations, followed by 25 comparisons. The feedback of each type was provided in 5 blocks of 5, where each block consisted consisted of the same reward function. In between blocks, the interface alerted the participants to the fact that the reward function had changed. Since participants had unlimited time to review the reward function before providing a demonstration or indicating a comparison, they were able to pace themselves and take breaks as necessary.

We used a hold-one-out method for analyzing the user study data. In this setting, we would test reward inference on each reward function by fitting beta on the data provided for the other 4 reward functions and then testing the reward inference on the held-out reward function. We completed this for each reward function, resulting in 5 trials per user.
\paragraph{Statistical Tests} We also conducted a two-tailed paired T-Test between the fitted and default reward inference results computed in the User Study. The results are shown in Table 3 and demonstrate that, while there was a statistically significant difference for both active learning and demonstrations, we did not observe a statistically significant difference between the methods for comparisons.
\begin{table*}
\label{tab:app_t_test}
\footnotesize
\centering
    \caption{User Study t-test Results.}\label{tab:userstudyttest}
\begin{tabular}{llll}
 \toprule 
\bfseries Feedback Type  & \bfseries p-Value   \\
 \midrule 
              
Demonstration & 0.02                                                               \\
              
\midrule 

Comparison & 0.34 \\

\midrule 
Active  & 0.0001                
\end{tabular}

\end{table*}
\begin{figure*}
\centering
      \begin{subfigure}[b]{0.28\textwidth}
         \centering
         \includegraphics[width=\textwidth]{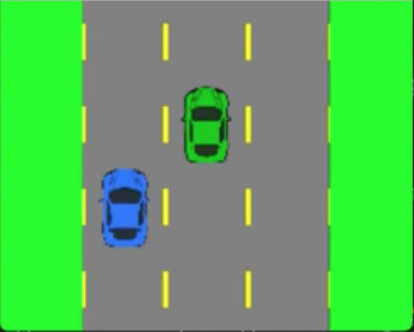}
         \caption{Environment Visualization}
         \label{subfig:app_car_inter}
     \end{subfigure}
     \hfill
     \begin{subfigure}[b]{0.28\textwidth}
         \centering
         \includegraphics[width=\textwidth]{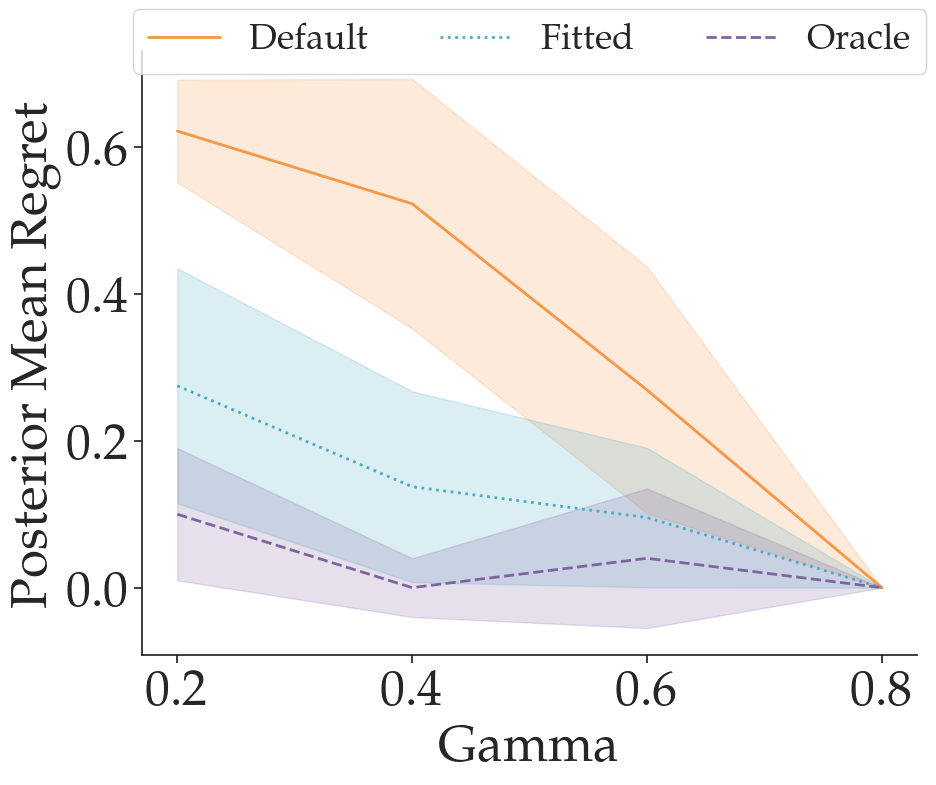}
         \caption{Myopia}
         \label{subfig:app_car_myopia}
     \end{subfigure}
     \hfill
    \begin{subfigure}[b]{0.28\textwidth}
         \centering
         \includegraphics[width=\textwidth]{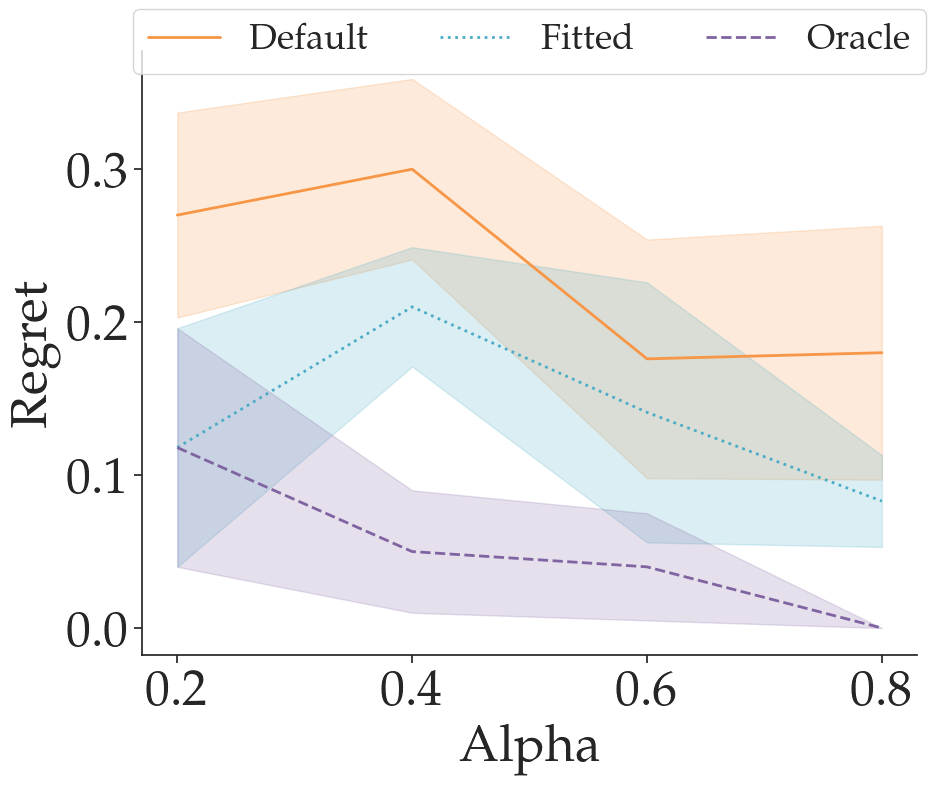}
         \caption{Extremal}
         \label{subfig:app_carextremal}
     \end{subfigure}

\caption{\textbf{Car Driving Simulator}. In (a), we show a visualization of our car-driving environment. In (b) and (c) we show the performance of modelling rationality in the car driving setting when learning from biased feedback. We find that beta-fitting results in improved reward inference in both the myopia and extremal biased settings.}
\label{fig:car}
\end{figure*}

\section{Car Driving Simulator}
In addition to our results in the gridworld navigation environment, we tested the importance of fitting $\beta$ on a car driving simulator environment we developed. In our simulator, the agent car drives along a three-lane road in the presence of other vehicles. The reward features considered are a car crash indicator, in addition to a one-hot encoded indicator of the cars lane (allowing for a lane preference in the reward function). In Figure~\ref{fig:car}, we show that learning a reward function from biased feedback can be improved by modeling rationality in the way we propose.
\section{Compute Resources Used}
The majority of experiments, as well as the user study for this paper were run on a Mid-2015 MacBook-Pro with an Intel Quad-Core i7 Processor. Certain experiments were run on a lab Intel 8-core i7 Processor.
\end{document}